%%%%%%%% ICML 2019 EXAMPLE LATEX SUBMISSION FILE %%%%%%%%%%%%%%%%%

\documentclass{article}

% Recommended, but optional, packages for figures and better typesetting:
\usepackage{microtype}
\usepackage{graphicx}
\usepackage{subfigure}
\usepackage{booktabs} % for professional tables
\usepackage{xcolor}
\usepackage{bbm}
\usepackage{mathtools}
\usepackage{natbib}
\usepackage{verbatim}
\usepackage[linesnumbered, ruled,vlined]{algorithm2e}
\usepackage{algorithm,algorithmic}
\usepackage{enumitem}
\setlist[itemize]{topsep=0pt, leftmargin=3mm}

% Recommended, but optional, packages for figures and better typesetting:
\usepackage{microtype}
\usepackage{graphicx}
\usepackage{subfigure}
\usepackage{booktabs} % for professional tables
\usepackage{hyperref}
\usepackage{wrapfig}
\usepackage{graphicx}

% Attempt to make hyperref and algorithmic work together better:

\usepackage{tikz}
\usetikzlibrary{positioning}
\usetikzlibrary{intersections,shapes.arrows}

\usepackage{amsthm,amsmath,amssymb, amsfonts}
\newtheorem{theorem}{Theorem}[section]
\newtheorem{lemma}[theorem]{Lemma}

\allowdisplaybreaks

\usepackage{amsmath}

\usepackage{hyperref}

\usepackage[preprint, nonatbib]{neurips_2019}

% Use the following line for the initial blind version submitted for review:
\usepackage{comment}

\title{Learning to Score Behaviors for Guided Policy Optimization}

\begin{document}

\author{%
  Aldo Pacchiano$^*$ \\
  UC Berkeley \\
   \And
  Jack Parker-Holder$^*$ \\
  University of Oxford \\
   \And
  Yunhao Tang$^*$ \\
  Columbia University \\
   \AND
  Anna Choromanska \\
  NYU \\
  \And
  Krzysztof Choromanski\\
  Google Brain Robotics\\
   \And
  Michael I. Jordan \\
  UC Berkeley \\
   \\[-50.0ex]
   }

\maketitle

\vskip 0.3in

\def\thefootnote{*}\footnotetext{Equal contribution.}

\begin{abstract}
We introduce a new approach for comparing reinforcement learning policies, using Wasserstein distances (WDs) in a newly defined latent behavioral space. We show that by utilizing the dual formulation of the WD, we can learn score functions over policy behaviors that can in turn be used to lead policy optimization towards (or away from) (un)desired behaviors. Combined with smoothed WDs, the dual formulation allows us to devise efficient algorithms that take stochastic gradient descent steps through WD regularizers. We incorporate these regularizers into two novel on-policy algorithms, Behavior-Guided Policy Gradient and Behavior-Guided Evolution Strategies, which we demonstrate can outperform existing methods in a variety of challenging environments. We also provide an open source demo\footnote{Available at \url{https://github.com/behaviorguidedRL/BGRL}. We emphasize this is the exact code from our experiments, but a demo to build intuition and clarify our methods.}.
\end{abstract}

\section{Introduction}
One of the key challenges in reinforcement learning (RL) is to efficiently incorporate the behaviors of learned policies into optimization algorithms \citep{b1, meyerson, conti}. The fundamental question we aim to shed light on in this paper is:

\begin{center}
\textit{What is the right measure of similarity between two policies acting on the same underlying MDP and how can we devise algorithms to leverage this information for RL?}
\end{center}
In simple terms, the main thesis motivating the methods we propose is that: 
\begin{center}
\textit{Two policies may perform similar actions at a local level but result in very different global behaviors.}
\end{center}

We propose to define \textit{behaviors} via so-called Behavioral Policy Embeddings (henceforth referred to as Policy Embeddings), which can be both on policy and off policy.

On policy embeddings are achieved via what we call Behavioral Embeddings Maps (BEMs) - functions mapping trajectories of a policy into a latent behavioral space representing trajectories in a compact way. We define the policy embedding as the pushforward distributions over trajectory embeddings as a result of applying a BEM to the policy's  trajectories. Importantly, two policies with distinct distributions over trajectories may result in the same probabilistic embedding. Off policy embeddings in contrast correspond to state and policy evaluation pairs resulting of evaluating the policy on states sampled from a probing state distribution that can be chosen independently from the policy. 

Both embedding mechanisms result in probabilistic Policy Embeddings, which allow us to identify a policy with a distribution with support on an embedding space. Policy Embeddings provide us a way to rigorously define dissimilarity between policies. We do this by equipping them with metrics defined on the manifold of probability measures, namely a class of Wasserstein distances (WDs, \cite{villani}). There are several reasons for choosing WDs:

%I don't think this is what we want to say: Its importance comes from the fact that the natural policy optimization landscape is highly non-Euclidean - two policies with similar vectorized representations may lead to drastically different trajectories and rewards. It is thus reasonable to seek more expressive policy encodings to represent this information. 

%A natural approach is to identify policies with probabilistic distributions over spaces of the trajectories they generate, and ultimately obtain the corresponding pushforward distributions by applying certain embeddings to these trajectories focusing on different behavioral aspects (state visit frequencies, distributions over rewards, frequency of landing on a specific set of "dangerous" states, etc.). Dissimilarity between different policies can be interpreted in this framework as the distance in certain metric space defined on the manifold of probabilistic measures.

\begin{itemize}
    \item \textbf{Flexibility}. We can use any cost function between embeddings of trajectories, allowing the distance between policy embeddings to arise organically from an interpretable distance between embedding points.  
    \item \textbf{Non-injective BEMs}. Different trajectories may be mapped to the same embedding point (for example in the case of the last-state embedding). This precludes the use of likelihood-based distances such as the KL divergence \citep{kullback1951}, which we discuss in Section \ref{sec:related}. 
    \item \textbf{Behavioral Test Functions}. Solving the dual formulation of the WD objective yields a pair of test functions over the space of embeddings, used to score trajectories or state policy pairs
    (see: Sec. \ref{section::repulsion_learning}).
    %\textcolor{red}{Whenever using on policy embeddings. .... }
\end{itemize}

The Behavioral Test Functions, underpin all our algorithms, directing optimization towards desired behaviors. To learn them, it suffices to define the embedding type and BEM (if required) and the cost function between points in the resulting behavioral manifold. To mitigate the computational burden of computing WDs, we rely on their entropy-regularized formulations. This allows us to update the learned test functions in a computationally efficient manner via stochastic gradient descent (SGD) on a Reproducing Kernel Hilbert Space (RKHS). We develop a novel method for stochastic optimal transport based on random feature maps \citep{randomfeatures2007} to produce compact and memory-efficient representations of learned behavioral test functions. Finally, having laid the groundwork for comparing policies via behavior-driven trajectory or state-policy pairs scores, we address our core question by introducing two new on-policy RL algorithms:
\begin{itemize}
    \item \textbf{Behavior Guided Policy Gradients (BGPG)}: We propose to replace the KL-based trust region from \cite{trpo} with a WD-based in the behavior space.
    \item \textbf{Behavior Guided Evolution Strategies (BGES)}: BGES improves on Novelty Search \cite{conti} by jointly optimizing for reward and \textit{novelty} using the WD in the behavior space.
\end{itemize}

We also demonstrate a way to harness our methodology for imitation and repulsion learning (Section \ref{section::repulsion_learning}), showing the universality of the proposed techniques.

\section{Motivating Behavior-Guided Reinforcement Learning}

Throughout this paper we prompt the reader to think of a policy as a distribution over its behaviors, induced by the policy's (possibly stochastic) map from state to actions and the unknown environment dynamics. We care about summarizing (or embedding) behaviors into succinct representations that can be compared with each other (via a cost/metric). These comparisons arise naturally when answering questions such as: Has a given trajectory achieved a certain level of reward? Has it visited a certain part of the state space? We think of these summaries or embeddings as characterizing the behavior of the trajectory or relevant state policy-pairs. We formalize these notions in Section \ref{sec:behavior}. 

We show that by identifying policies with the embedding distributions that result of applying the embedding function (summary) to their trajectories, and combining this with the provided cost metric, we can induce a topology over the space of policies given by the Wasserstein distance over their embedding distributions. The methods we propose can be thought of as ways to leverage this ``behavior" geometry for a variety of downstream applications such as policy optimization and imitation learning. 

This topology emerges naturally from the sole definition of an embedding map (behavioral summary) and a cost function. Crucially these choices occur in the semantic space of behaviors as opposed to parameters or visitation frequencies\footnote{If we choose an appropriate embedding map our framework handles visitation frequencies as well. }. One of the advantages of choosing a Wasserstein geometry is that non-surjective trajectory embedding maps are allowed. This is not possible with a KL induced one (in non-surjective cases, computing the likelihood ratios in the KL definition is in general intractable). In Sections \ref{sec:wass} and \ref{sec:wrl} we show that in order to get a handle on this geometry, we can use the dual formulation of the Wasserstein distance to learn functions (Behavioral Test Functions) that can provide scores on trajectories which then can be added to the reward signal (in policy optimization) or used as a reward (in Imitation Learning). 

In summary, by defining an embedding map of trajectories into a behavior embedding space equipped with a metric\footnote{The embedding space can be discrete or continuous and the metric need not be smooth, and can be for example a simple discrete $\{0,1\}$ valued criterion}, our framework allows us to learn ``reward" signals  (Behavioral Test Functions) that can serve to steer policy search algorithms through the ``behavior geometry" either in conjunction with a task specific reward (policy optimization) or on their own (e.g. Imitation Learning). We develop versions of on policy RL algorithms which we call Behavior Guided Policy Gradient (BGPG) and Behavior Guided Evolution Strategies (BGES) that enhance their baseline versions by the use of learned Behavioral Test Functions. Our experiments in Section \ref{sec:exp} show this modification is useful. We also provide a simple example for repulsion learning and Imitation Learning, where we only need access to an expert's embedding. Our framework also has obvious applications to safety, learning policies that avoid undesirable behaviors. 

A final important note is that in this work we only consider simple heuristics for the embeddings, as used in the existing literature. For BGES, these embeddings are those typically used in Quality Diversity algorithms \cite{qualdiv}, while for BGPG we reinterpret the action distribution currently used in KL-based trust regions \cite{schulman2017proximal, trpo}. We emphasize the focus of this paper is on introducing the framework to score these behaviors to guide policy optimization.

\section{Defining Behavior in Reinforcement Learning}
\label{sec:behavior}

A Markov Decision Process ($\mathrm{MDP}$) is a tuple $(\mathcal{S},\mathcal{A},\mathrm{P},\mathrm{R})$. Here $\mathcal{S}$ and $\mathcal{A}$ stand for the sets of states and actions respectively, such that for $s,s^{\prime} \in \mathcal{S}$ and $a \in \mathcal{A}$: $\mathrm{P}(s^{\prime}| a,s)$ is the probability that the system/agent transitions from $s$ to $s^{\prime}$ given action $a$ and $\mathrm{R}(s^{\prime},a,s)$ is a reward obtained by an agent transitioning from $s$ to $s^{\prime}$ via $a$. A policy $\pi_{\theta}:\mathcal{S} \rightarrow \mathcal{A}$ is a (possibly randomized) mapping (parameterized by $\theta \in \mathbb{R}^{d}$) from $\mathcal{S}$ to $\mathcal{A}$. 
Let $\Gamma = \{ \tau = s_0, a_0, r_0, \cdots s_H, a_H, r_H  \text{ s.t. } s_i \in \mathcal{S}, a_i \in \mathcal{A}, r_i \in \mathbb{R}\}$ be the set of possible trajectories enriched by sequences of partial rewards under some policy $\pi$. The undiscounted reward function $\mathcal{R} : \Gamma \rightarrow \mathbb{R}$ (which expectation is to be maximized by optimizing $\theta$) satisfies $\mathcal{R}(\tau) = \sum_{i=0}^H r_i$, where $r_{i}=R(s_{i+1},a_{i},s_{i})$.

\vspace{-2mm}
\subsection{Behavioral Embeddings}\label{subsection::behavioral_embedding}

In this work we identify a policy with what we call a Policy Embedding. We focus on two types of Policy Embeddings both of which are probabilistic in nature, on policy and off policy embeddings, the first being trajectory based and the second ones state-based. 

\subsubsection{On Policy Embeddings}

We start with a Behavioral Embedding Map (BEM), $\Phi: \Gamma \rightarrow \mathcal{E}$, mapping trajectories to embeddings (Fig. ~\ref{figure:bem}), where $\mathcal{E}$ can be seen as a behavioral manifold. On Policy Embeddings can be for example: a) \textbf{State-Based}, such as the final state $\Phi_{1}(\tau) = s_{H}$ b) \textbf{Action-based:} such as the concatenation of actions $\Phi_{4}(\tau)=[a_0,...,a_H]$ or c) \textbf{Reward-based:} the total reward $\Phi_{5}(\tau) = \sum_{t=0}^H r_t$, reward-to-go vector $\Phi_{6}(\tau) = \sum_{t=0}^H r_t \left(\sum_{i= 0}^t e_i   \right)$ (where $e_i \in \mathbb{R}^{H+1}$ is a one-hot vector corresponding to $i$ with dimension index from $0$ to $H$). Importantly, the mapping does not need to be surjective, as we see on the example of the final state embedding.

\label{bems}
\begin{figure}[H]
    \begin{minipage}{0.99\textwidth}
    \centering
    \subfigure{\includegraphics[width=.35\textwidth]{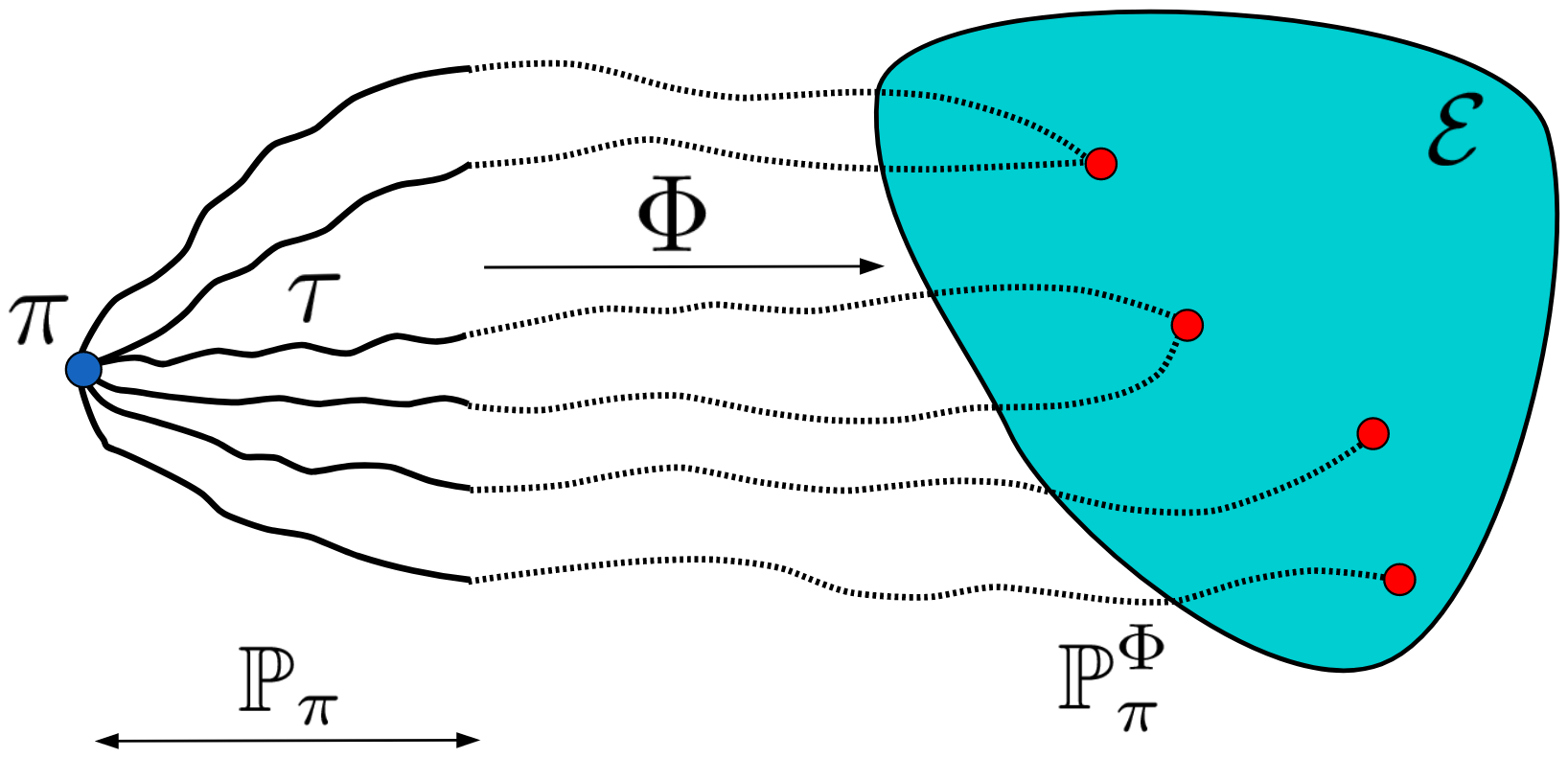}}
    \vspace{-1mm}
    \caption{\small{\textbf{Behavioral Embedding Maps (BEMs)} map trajectories to points in the behavior embedding space $\mathcal{E}$. Two trajectories may map to the same point in $\mathcal{E}$.}}
    \label{figure:bem}
    \end{minipage}
\end{figure}

Given a policy $\pi$, we let $\mathbb{P}_\pi$ denote the distribution induced over the space of trajectories $\Gamma$ and by $\mathbb{P}_\pi^\Phi$ the corresponding pushforward distribution on $\mathcal{E}$ induced by $\Phi$. We call $P_\pi^\Phi$ the policy embeddings of a policy $\pi$. A policy $\pi$ can be fully characterized by the distribution $\mathbb{P}_\pi$ (see: Fig. \ref{figure:bem}).

Additionally, we require $\mathcal{E}$ to be equipped with a metric (or cost function) $C:\mathcal{E} \times \mathcal{E} \rightarrow \mathbb{R}$. Given two trajectories $\tau_1, \tau_2$ in $\Gamma$, $C(\Phi(\tau_1), \Phi(\tau_2))$ measures how different these trajectories are in the behavior space. We note that some embeddings are only for the tabular case ($|\mathcal{S}|,|\mathcal{A}| < \infty$) while others are universal. 

\begin{comment}
The following are examples of BEMs categorized into three main types (we will use examples from all three types in our experiments in Section \ref{sec:exp}):
\begin{enumerate}
    \item \textbf{State-based:} the final state $\Phi_{1}(\tau) = s_{H}$, the visiting frequency of a fixed state  $\Phi_{2}^{s}(\tau) = \sum_{t=0}^H \mathbf{1}( s_t = s)$,
    the frequency vector of visited states $\Phi_{3}(\tau) = \sum_{t=0}^H e_{s_t}$ (where $e_{s} \in \mathbb{R}^{|\mathcal{S}|}$ is the one-hot vector corresponding to state $s$); see also Section \ref{sec:exp_maxmax}.
    \item \textbf{Action-based:} the concatenation of actions $\Phi_{4}(\tau)=[a_0,...,a_H]$; see also Section \ref{sec:exp_minmax}.
    \item \textbf{Reward-based:} the total reward $\Phi_{5}(\tau) = \sum_{t=0}^H r_t$, reward-to-go vector $\Phi_{6}(\tau) = \sum_{t=0}^H r_t \left(\sum_{i= 0}^t e_i   \right)$ (where $e_i \in \mathbb{R}^{H+1}$ is a one-hot vector corresponding to $i$ and with dimensions indexed from $0$ to $H$); see also Section \ref{sec:exp_minmax} and Section \ref{imitation_exp}.
\end{enumerate}
\end{comment}

\subsubsection{Off Policy Embeddings}

Let $\mathbb{P}_\mathcal{S}$ be some ``probe" distribution over states $\mathcal{S}$ and $\pi$ be a policy. We define $\mathbb{P}_\pi^{\Phi_\mathcal{S}}$ to be the distribution of pairs $(s, \pi(s))$ for $s \sim \mathbb{P}_S$. We identify $\mathcal{E}$ with the product space $\mathcal{S}\times \Delta_\mathcal{A}$ (where $\Delta_{\mathcal{A}}$ denotes the set of distributions over $\mathcal{A}$) endowed with an appropriate metric $C: \mathcal{E} \times \mathcal{E} \rightarrow \mathbb{R}$.In our experiments we identify $\mathcal{C}$ with the $l_2$ norm over $\mathcal{E}$ and $\mathbb{P}_\mathcal{S}$ with a mechanism that samples states from a buffer of visited states. We only add an $\mathcal{S}$ to the notation for $\mathbb{P}_\pi^{\Phi_\mathcal{S}}$ when distinguishing from on-policy embeddings is needed. 

This definition allows the ``probing" distribution $\mathbb{P}_\mathcal{S}$ to be off policy, independent of the policy at hand.  If $C$ is a norm and $\mathbb{P}_{\mathcal{S}}$ has mass only in user-relevant areas of the state space, a WD of zero between two policies (whose embeddings use the same probing distribution) implies they behave equally where the user cares. Our off Policy Embeddings are of the form $(s, \pi(s))$ but other choices are valid.

\vspace{-1mm}

\section{Wasserstein Distance \& Optimal Transport Problem}
\label{sec:wass}

Let $\mu, \nu$ be (Radon) probability measures over domains $\mathcal{X} \subseteq \mathbb{R}^m, \mathcal{Y} \subseteq \mathbb{R}^n$ and let $\mathcal{C} : \mathcal{X} \times \mathcal{Y} \rightarrow \mathbb{R}$ be a cost function. For $\gamma > 0 $, a \textit{smoothed} Wasserstein Distance is defined as:
\vspace{-3mm}
\begin{equation}\label{equation::definition_cont_wass}
    \mathrm{WD}_\gamma(\mu, \nu ) :=  \min_{\pi \in \Pi(\mu, \nu)} \int_{\mathcal{X} \times \mathcal{Y}}C(\mathbf{x},\mathbf{y}) d\pi(\mathbf{x},\mathbf{y}) + \Sigma,
\end{equation} 
\vspace{-3mm}

\vspace{-2mm}
where $\Sigma=\gamma \text{KL}(\pi | \xi )$,
$\Pi(\mu, \nu)$ is the space of couplings (joint distributions) over $\mathcal{X} \times \mathcal{Y}$ with marginal distributions $\mu$ and $\nu$, $\text{KL}(\cdot | \cdot)$ denotes the KL divergence between distributions $\pi$ and $\rho$ with support $\mathcal{X}\times \mathcal{Y}$ defined as:
$
    \mathrm{KL}(\pi | \rho ) = \int_{\mathcal{X} \times \mathcal{Y}} \left( \log\left( \frac{ d\pi}{d\xi  }(\mathbf{x},\mathbf{y})    \right)     \right) d\pi(\mathbf{x},\mathbf{y})
$ and $\xi$ is a reference measure over $\mathcal{X} \times \mathcal{Y}$. When the cost is an $\ell_p$ distance  and $\gamma= 0$, $\mathrm{WD}_\gamma$ is also known as the Earth mover's distance and the corresponding optimization problem is known as the \emph{optimal transport problem} (OTP).

\subsection{Wasserstein Distance: Dual Formulation}

We will use smoothed WDs to derive efficient regularizers for RL algorithms.  To arrive at this goal, we first need to consider the dual form of Equation \ref{equation::definition_cont_wass}. 
Under the subspace topology \citep{bourbaki1966general} for $\mathcal{X}$ and $\mathcal{Y}$, let $\mathcal{C}(\mathcal{X})$ and $\mathcal{C}(\mathcal{Y})$ denote the space of continuous functions over $\mathcal{X}$ and $\mathcal{Y}$ respectively. The choice of the subspace topology ensures our discussion encompasses the discrete case.% $\mathcal{X}$ or $\mathcal{Y}$ or both are discrete.

 Let $C: \mathcal{X} \times \mathcal{Y} \rightarrow \mathbb{R}$ be a cost function, interpreted as the ``ground cost'' to move a unit of mass from $x$ to $y$. Define $\mathbb{I}$ as the function outputting values of its input predicates.
 Using Fenchel duality, we can obtain the following dual formulation of the problem in Eq.~\ref{equation::definition_cont_wass}:
\begin{equation}
\label{equation::dual_formulation_wass}
\mathrm{WD}_\gamma(\mu, \nu) = \max_{\lambda_\mu \in \mathcal{C}(\mathcal{X}), \lambda_\nu \in \mathcal{C}(\mathcal{Y}) } \Psi(\lambda_\mu, \lambda_\nu),    
\end{equation}
where $\Psi(\lambda_\mu, \lambda_\nu)=\int_{\mathcal{X}} \lambda_\mu( \mathbf{x}) d\mu(\mathbf{x}) \notag 
    - \int_{\mathcal{Y}} \lambda_\nu(\mathbf{y}) d\nu(\mathbf{y}) - E_C(\lambda_\mu, \lambda_\nu)$ 
and the damping term $E_{C}(\lambda_\mu, \lambda_\nu)$ equals:
\begin{small}
\begin{equation}
\label{equation::dual_penalty_continuous}
E_C(\lambda_\mu, \lambda_\nu) = 
\mathbb{I}(\gamma > 0) \int_{\mathcal{X} \times \mathcal{Y} } \rho(\mathbf{x},\mathbf{y}) d\xi(\mathbf{x}, \mathbf{y})+\mathbb{I}(\gamma=0)\mathbb{I}(\mathcal{A})
\end{equation}
\end{small}
\vspace{-3mm}

for $\rho(\mathbf{x},\mathbf{y})=\gamma \exp (\frac{ \lambda_\mu(\mathbf{x}) - \lambda_\nu(\mathbf{y}) - C(\mathbf{x},\mathbf{y})   }{   \gamma })$
and $\mathcal{A}=[(\lambda_\mu, \lambda_\nu) \in \{ (u,v) \text{ s.t. } \forall (\mathbf{x},\mathbf{y}) \in \mathcal{X} \times \mathcal{Y}: 
                                     u(\mathbf{x}) - v(\mathbf{y}) \leq C(\mathbf{x},\mathbf{y}) \}$].
                                    
 We will set the damping distribution $d\xi(\mathbf{x},\mathbf{y}) \propto 1$ for discrete domains and $d\xi(\mathbf{x},\mathbf{y}) = d\mu(\mathbf{x})d\nu(\mathbf{y})$ otherwise. 

If $\lambda_\mu^* , \lambda_\nu^*$ are the functions achieving the maximum in Eq.~\ref{equation::dual_formulation_wass}, and $\gamma$ is sufficiently small then $\mathrm{WD}_\gamma(\mu, \nu) \approx \mathbb{E}_\mu\left[    \lambda_\mu^*(\mathbf{x} ) \right]  - \mathbb{E}_\nu\left[  \lambda^*_\nu( \mathbf{y})  \right]$, with equality when $\gamma = 0$. When for example $\gamma = 0$, $\mathcal{X} = \mathcal{Y}$, and $C(x,x)= 0$ for all $x \in \mathcal{X}$, it is easy to see $\lambda^*_\mu(x) = \lambda^*_\nu(x) = \lambda^*(x)$ for all $x \in \mathcal{X}$. In this case the difference between $ \mathbb{E}_\mu\left[    \lambda^*(\mathbf{x} ) \right]$  and $\mathbb{E}_\mu\left[  \lambda^*( \mathbf{y})  \right]$ equals the WD. In other words, the function $\lambda^*$ gives higher scores to regions of the space $\mathcal{X}$ where $\mu$ has more mass. This observation is key to the success of our algorithms in guiding optimization towards desired behaviors.
\vspace{-.3cm}
\subsection{Computing $\lambda_\mu^*$ and $\lambda_\nu^*$}

We combine several techniques to make the optimization of objective from Eq.~\ref{equation::dual_formulation_wass} tractable. First, we replace $\mathcal{X}$ and $\mathcal{Y}$ with the functions from a RKHS corresponding to universal kernels \citep{micchelli}. This is justified since those function classes are dense in the set of continuous functions of their ambient spaces. In this paper we choose the RBF kernel and approximate it using random Fourier feature maps \citep{randomfeatures2007} to increase efficiency. Consequently, the functions $\lambda$ learned by our algorithms have the following form: $\lambda(\mathbf{x})= (\mathbf{p}^\lambda)^{\top}\phi(\mathbf{x})$, where $\phi$ is a random feature map with $m$ standing for the number of random features and $\mathbf{p}^\lambda \in \mathbb{R}^{m}$. For the RBF kernel, $\phi$ is defined as follows: $\phi(\mathbf{z}) = \frac{1}{\sqrt{m}} \cos(\mathbf{Gz}+\mathbf{b})$ for $\mathbf{z} \in \mathbb{R}^{d}$, where $\mathbf{G} \in \mathbb{R}^{m \times d}$ is Gaussian with iid entries taken from $\mathcal{N}(0,1)$, $b \in \mathbb{R}^{m}$ with iid $b_{i}$s such that $b_{i} \sim \mathrm{Unif}[0, 2\pi]$ and the $\cos$ function acts elementwise.

\begin{figure}[H]
    \begin{minipage}{0.99\textwidth}
    \centering
    \subfigure{\includegraphics[width=.35\textwidth]{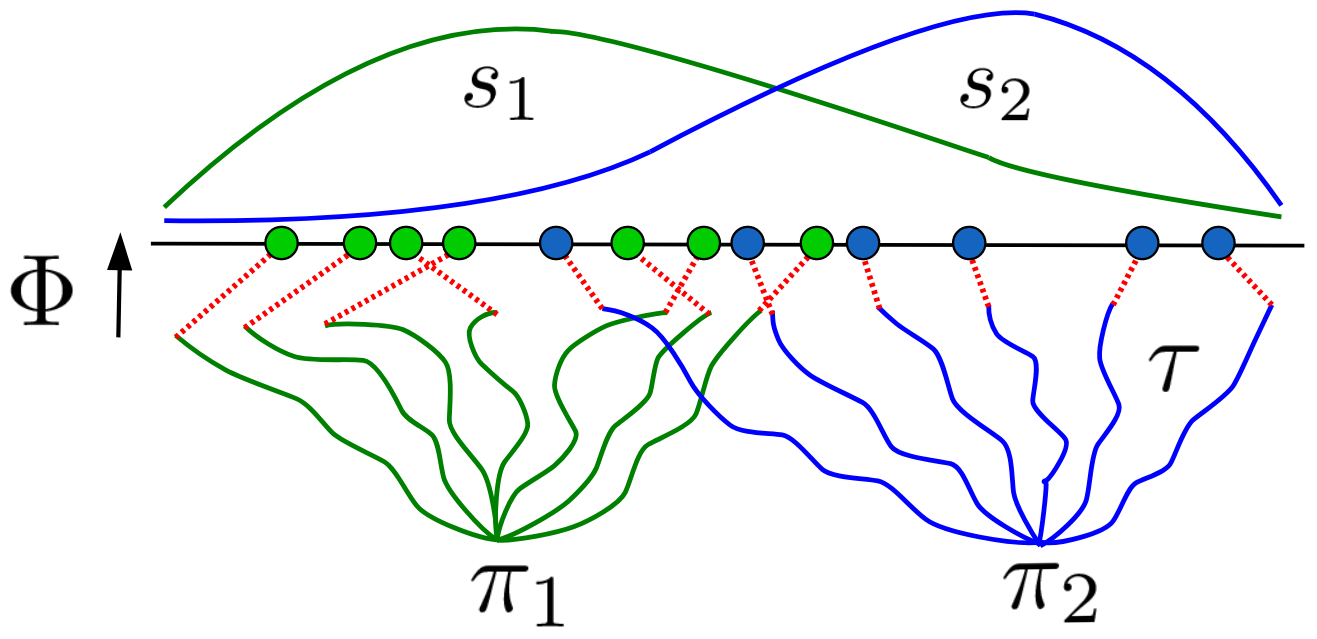}}
    \vspace{-1mm}
    \caption{\small{Two policies $\pi_1$ (green) and $\pi_2$ (blue) whose BEMs map trajectories to points in the real line.}}
    \label{figure:bef}
    \end{minipage}
\end{figure}

Henceforth, when we refer to optimization over $\lambda$, we mean optimizing over corresponding dual vectors $\mathbf{p}^{\lambda}$ associated with $\lambda$. We can solve for the optimal dual functions by running Stochastic Gradient Descent (SGD) over the dual objective in Eq.~\ref{equation::dual_formulation_wass}. Algorithm \ref{Alg:wass_constinuous_SGD} is the random features

\begin{algorithm}[tb]
\textbf{Input:} kernels $\kappa, \mathcal{\ell}$ over $\mathcal{X}, \mathcal{Y}$ respectively with corresponding random feature maps $\phi_\kappa, \phi_{\mathcal{\ell}}$, smoothing parameter $\gamma$, gradient step size $\alpha$, number of optimization rounds $M$, initial dual vectors $\mathbf{p}_{0}^\mu, \mathbf{p}_{0}^\nu$. \\
\For{$t =0, \cdots , M$ }{
   1. Sample $(x_t, y_t) \sim \mu \bigotimes \nu$. \\
   2. Update:      $ \binom{\mathbf{p}^\mu_t}{\mathbf{p}_t^\nu}$ using Equation \ref{equation::SGD_random_features}.\\
}
\textbf{Return:} $\mathbf{p}_{M}^\mu, \mathbf{p}_{ M}^\nu$. 
 \caption{Random Features Wasserstein SGD}
\label{Alg:wass_constinuous_SGD}
\end{algorithm}
\vspace{-3mm}
equivalent of Algorithm 3 in \cite{genevay2016stochastic}.
Given input kernels $\kappa, \ell$ and a fresh sample $(x_t, y_t) \sim \mu \bigotimes \nu$ the SGD step w.r.t. the current iterates $\mathbf{p}_{t-1}^\mu, \mathbf{p}_{t-1}^\nu$ satisfies:
\begin{small}
\begin{align}
    F(\mathbf{p}_1, \mathbf{p}_2, x,y ) &= \exp\left( \frac{ (\mathbf{p}_1)^\top \phi_\kappa(x)  -  (\mathbf{p}_2)^\top \phi_{\mathcal{\ell}}(x)  - C(x,y) }{\gamma} \right)\notag\\
  \binom{\mathbf{p}^\mu_{t+1}}{\mathbf{p}_{t+1}^\nu} &= \binom{  \mathbf{p}^\mu_{t} }{\mathbf{p}^\nu_{t}}
      +  \left(1- F( \mathbf{p}_{t}^\mu, \mathbf{p}_{t}^\nu, x_t, y_t) \right)v_t,\label{equation::SGD_random_features}
\end{align}
\end{small}
where $v_t =\frac{\alpha}{\sqrt{t}}(\phi_\kappa(x_t), -\phi_\ell(y_t)  )^\top $. An explanation and proof of these formulae is in Lemma \ref{lemma::gradient_random_features} in the Appendix. If $\mathbf{p}_*^\mu, \mathbf{p}_*^{\nu}$ are the optimal dual vectors, $p_* = (\mathbf{p}_*^\mu, \mathbf{p}_*^\nu)^\top$, $(x_1, y_1), \cdots, (x_k, y_k) \stackrel{\text{i.i.d}}{\sim} \mu \bigotimes \nu$, $\mathbf{v}_i^{\kappa, \ell} = (\phi_\kappa(x_i), -\phi_\ell(y_i))^\top$ for all $i$, and $\hat{\mathbb{E}}$ denotes the empirical expectation over the $k$ samples $\{(x_i, y_i)\}_{i=1}^k$, Algorithm 1 can be used to get an estimator of $\mathrm{WD}_\gamma(\mu, \nu)$ as:
\begin{small}
\begin{align}
\label{equation::empirical_smooth_wass} 
 \widehat{\mathrm{WD}}_\gamma( \mu, \nu ) &=  \hat{\mathbb{E}}\left[  \langle \mathbf{p}_*, \mathbf{v}_i^{\kappa, \ell} \rangle -  \frac{F(\mathbf{p}_*^\mu, \mathbf{p}_*^\nu, x_i, y_i)}{\gamma}\right]
\end{align}
\end{small}
\vspace{-5mm}

\vspace{-3mm}
\section{Behavior-Guided Reinforcement Learning}
\label{sec:wrl}
We explain now how to get practical algorithms based on the presented methods. Denote by $\pi_\theta$ a policy parameterized by $\theta \in \mathbb{R}^d$. The goal of policy optimization algorithms is to find a policy maximizing, as a function of the policy parameters, the expected total reward $ \mathcal{L}(\theta) := \mathbb{E}_{\tau \sim\mathbb{P}_{\pi_\theta}}\left[\mathcal{R}(\tau)   \right]$.

\vspace{-3mm}
\subsection{Behavioral Test Functions}\label{subsection::behavioral_test_functions}

If $C : \mathcal{E} \times \mathcal{E} \rightarrow \mathbb{R}$ is a cost function defined over behavior space $\mathcal{E}$, and $\pi_1, \pi_2$ are two policies, then in the case of \textbf{On-Policy} Embeddings:
\begin{small}
\begin{equation*}\label{equation::behavioral_test_functions_intuition}
    \mathrm{WD}_\gamma(\mathbb{P}_{\pi_1}^\Phi, \mathbb{P}_{\pi_2}^\Phi) \approx \mathop{\mathbb{E}}_{\tau \sim \mathbb{P}_{\pi_1}}\left[ \lambda_1^*(\Phi(\tau))    \right] - \mathop{\mathbb{E}}_{\tau \sim \mathbb{P}_{\pi_2}} \left[   \lambda_2^*(\Phi(\tau))    \right],
\end{equation*}
\end{small}
where $\lambda_1^*, \lambda_2^*$ are the optimal dual functions. The maps $s_1 := \lambda_1^* \circ \Phi : \Gamma \rightarrow \mathbb{R}$ and $s_2 := \lambda_2^* \circ \Phi : \Gamma \rightarrow \mathbb{R}$ define score functions over the space of trajectories. If $\gamma$ is close to zero, the score function $s_i$ gives higher scores to trajectories from $\pi_i$ whose behavioral embedding is common under $\pi_i$ but rarely appears under $\pi_j$ for $j \neq i$ (Fig. \ref{figure:bef}).
In the case of \textbf{Off-Policy} Embeddings:
\begin{small}
\begin{equation*}\label{equation::behavioral_test_functions_intuition}
    \mathrm{WD}_\gamma(\mathbb{P}_{\pi_1}^{\Phi_{\mathcal{S}}}, \mathbb{P}_{\pi_2}^{\Phi'_{\mathcal{S}}}) \approx \mathop{\mathbb{E}}_{S \sim \mathbb{P}_\mathcal{S}}\left[ \lambda_1^*(S, \pi_1(S))    \right] - \mathop{\mathbb{E}}_{S \sim \mathbb{P}'_\mathcal{S}} \left[   \lambda_2^*(S, \pi_2(S))    \right],
\end{equation*}
\end{small}
where $\lambda_1^*, \lambda_2^*$ are maps from state policy pairs $(S, \pi_1(S))$ to scores, and $\mathbb{P}_{\mathcal{S}}, \mathbb{P}_{\mathcal{S}}'$ are probing distributions.

\subsection{Repulsion and Imitation Learning}
\label{section::repulsion_learning}

To illustrate the intuition behind behavioral test functions and on policy embeddings, we introduce an algorithm for multi-policy repulsion learning based on our framework. 
Algorithm \ref{Alg:repulsion_learning} maintains two policies $\pi^\mathbf{a}$ and $\pi^{\mathbf{b}}$.
\vspace{-3mm}

\begin{algorithm}[H]
\textbf{Input: } $\beta, \eta > 0$, $M\in \mathbb{N}$ \\
\textbf{Initialize:} Initial stochastic policies $\pi_0^\mathbf{a}, \pi_0^\mathbf{b}$, parametrized by $\theta_0^\mathbf{a}, \theta_0^\mathbf{b} $ respectively, Behavioral Test Functions $\lambda_1^\mathbf{a}, \lambda_2^\mathbf{b}$\; \\
\For{$t = 1, \dots , T$ }{
   1. Collect $\{ \tau_i^\mathbf{a} \}_{i=1}^M\sim\mathbb{P}_{\pi_{t-1}^\mathbf{a}}$ and $\{ \tau_i^\mathbf{b}  \}_{i=1}^M \sim \mathbb{P}_{\pi_{t-1}^\mathbf{b}}$.\\
   2. Form $\tilde{R}_\mathbf{c}(\tau_1, \tau_2)$ for $\mathbf{c} \in \{ \mathbf{a}, \mathbf{b}\}$ using Equation \ref{equation::repulsion_reward_augmentation}.\; \\
  3. For $\mathbf{c} \in \{\mathbf{a}, \mathbf{b} \}$ and $(\tau_1, \tau_2) \sim \{ \tau_i^\mathbf{a} \}_{i=1}^M \times \{ \tau_i^\mathbf{b} \}_{i=1}^M$ use REINFORCE~\cite{williams1992simple} to perform update:
  \begin{align*}
       \theta_t^{\mathbf{c}} = \theta_{t-1}^{\mathbf{c}} + \eta \nabla_{\theta} \tilde{R}_\mathbf{c}(\tau_1, \tau_2)
  \end{align*}
  5. Update $\lambda_1^\mathbf{a}, \lambda_2^\mathbf{b}$ with $\{ \tau_i^\mathbf{a}, \tau_i^\mathbf{b} \}_{i=1}^M$ via Algorithm \ref{Alg:wass_constinuous_SGD}.
}
 \caption{Behvaior-Guided Repulsion Learning}
\label{Alg:repulsion_learning}
\end{algorithm}
\vspace{-6mm}

Each policy is optimized by taking a policy gradient step (using the REINFORCE gradient estimator \cite{williams1992simple}) to optimize surrogate rewards $\tilde{\mathcal{R}}_\mathbf{a}$ and $\tilde{\mathcal{R}}_\mathbf{b}$. 

\vspace{-5mm}
\begin{figure}[H]
    \centering\begin{minipage}{0.7\textwidth}
	\centering\subfigure[$\pi_0^\mathbf{a}$]{\includegraphics[width=0.25\textwidth]{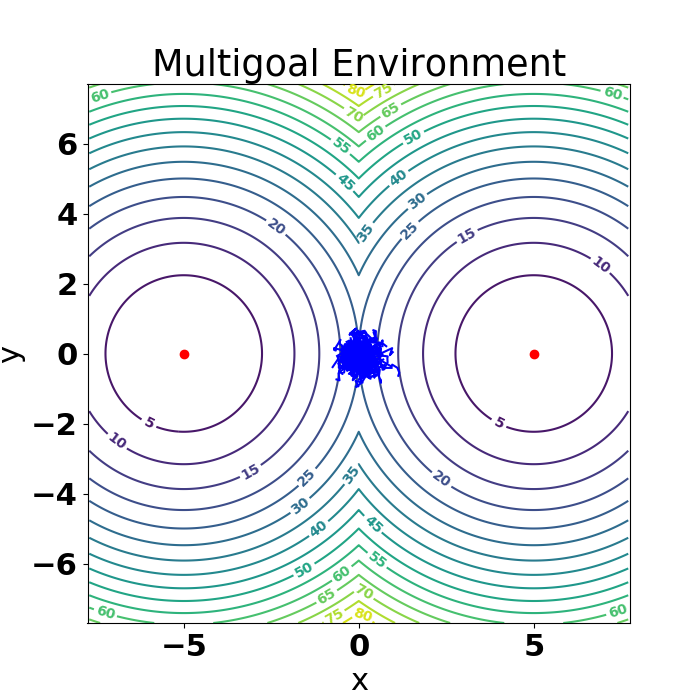}}	
	\centering\subfigure[$\pi_0^\mathbf{b}$]{\includegraphics[width=0.25\textwidth]{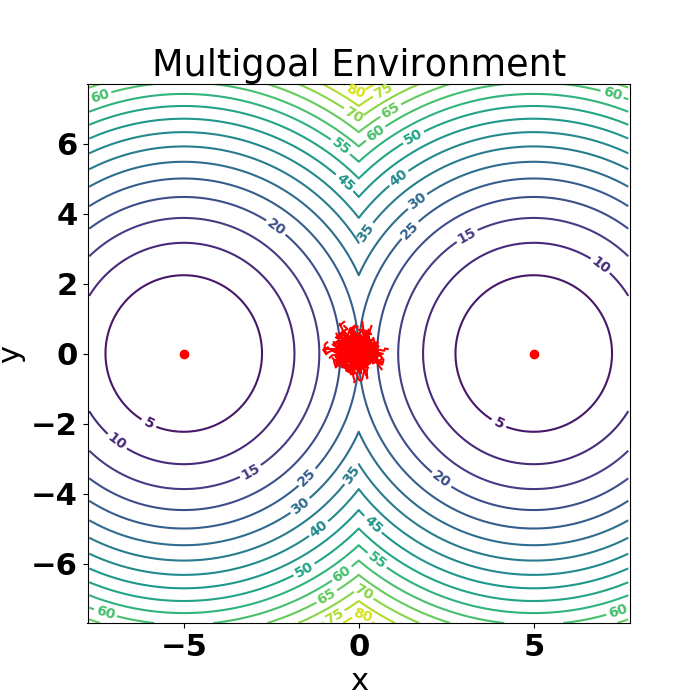}}
	\centering\subfigure[$\lambda^{\mathbf{a}}$ \& $-\lambda^{\mathbf{b}}$, $t=0$]{\includegraphics[width=0.36\textwidth]{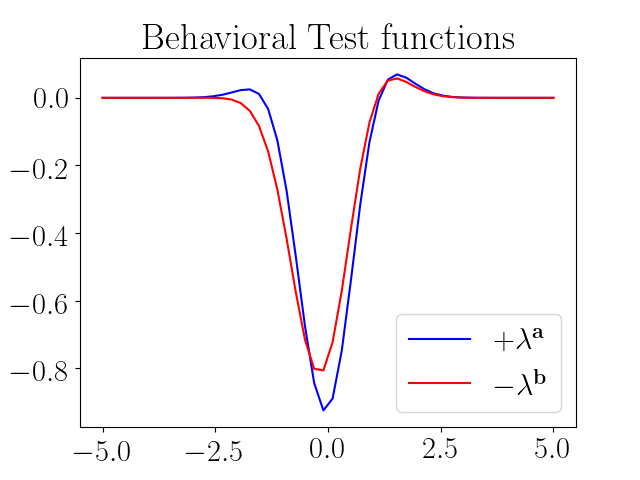}}
	\end{minipage}
    \begin{minipage}{0.7\textwidth}
	\centering\subfigure[$\pi_{22}^\mathbf{a}$]{\includegraphics[width=0.25\textwidth]{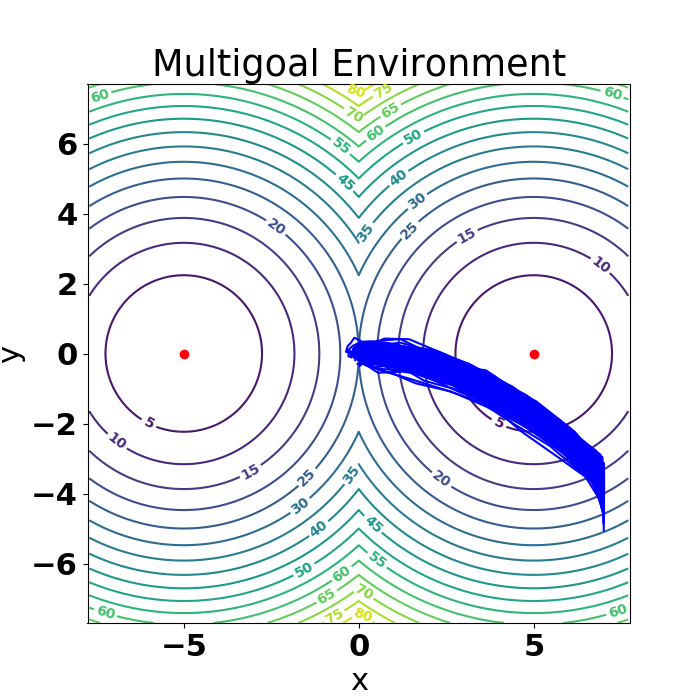}}	
	\centering\subfigure[$\pi_{22}^\mathbf{b}$]{\includegraphics[width=0.25\textwidth]{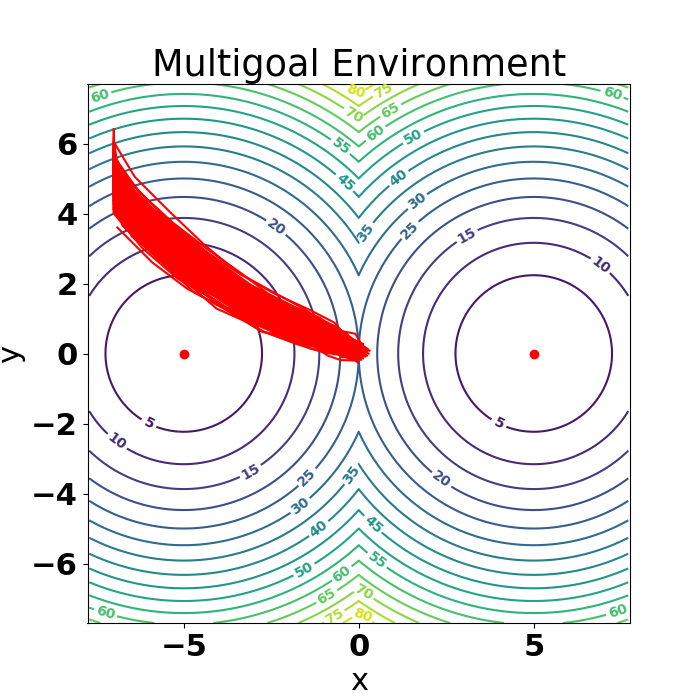}}
	\centering\subfigure[ $\lambda^{\mathbf{a}}$ \& $-\lambda^{\mathbf{b}}$, $t=22$]{\includegraphics[width=0.36\textwidth]{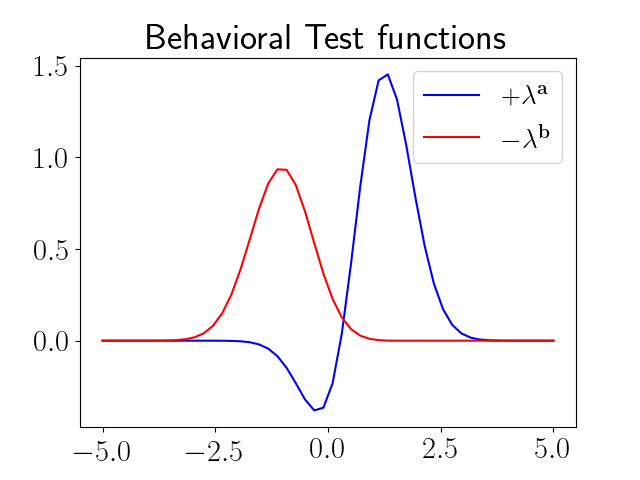}}
	\end{minipage}
	\begin{minipage}{0.7\textwidth}
	\centering\subfigure[$\pi_{118}^\mathbf{a}$]{\includegraphics[width=0.25\textwidth]{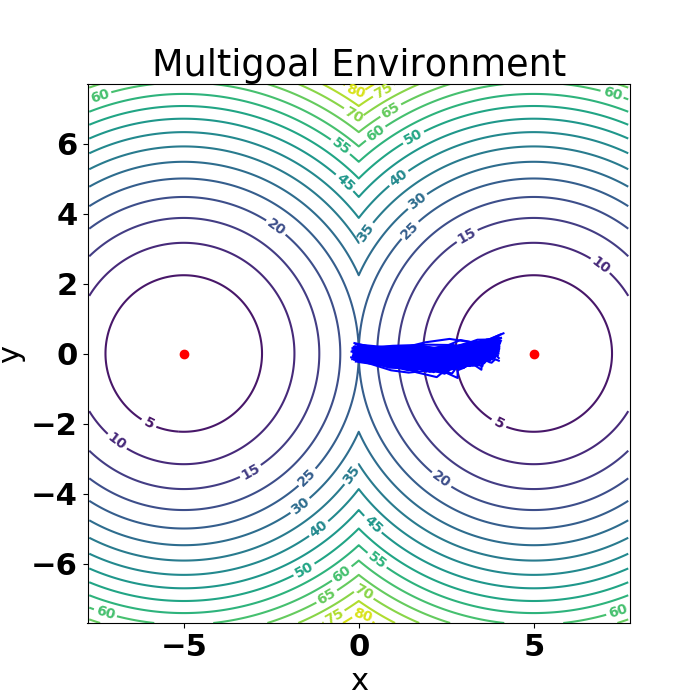}}	
	\centering\subfigure[$\pi_{118}^\mathbf{b}$]{\includegraphics[width=0.25\textwidth]{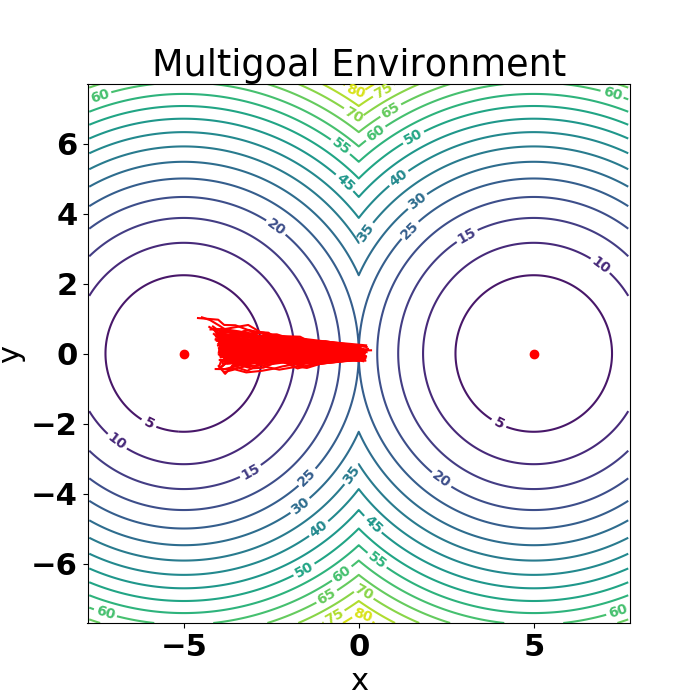}}
	\centering\subfigure[$\lambda^{\mathbf{a}}$ \& $-\lambda^{\mathbf{b}}$, $t=118$]{\includegraphics[width=0.36\textwidth]{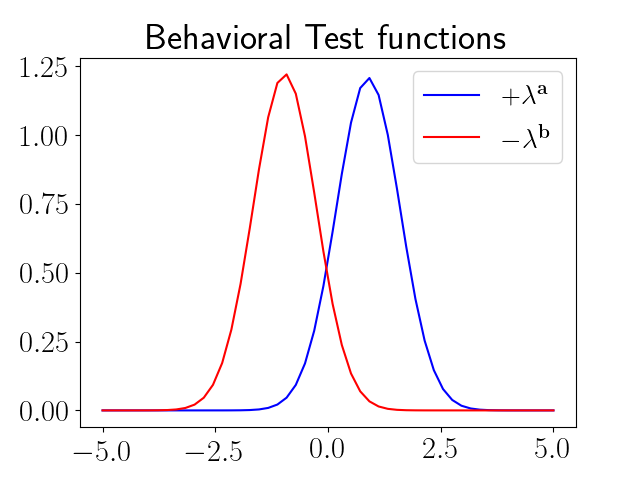}}
	\end{minipage}
	\vspace{-1mm}
	\caption{\small{a) and b) Initial state of policies $\pi^{\mathbf{a}}, \pi^{\mathbf{b}}$ and Test functions $\lambda^{\mathbf{a}}, \lambda^{\mathbf{b}}$. d)-i) Policy evolution and Test Functions.}}
	\label{fig:repulsionplot} 
\end{figure}
\vspace{-5mm}

These combine the signal from the task's reward function $\mathcal{R}$ and the repulsion score encoded by the input BEM $\Phi$ and behavioral test functions $\lambda^{\mathbf{a}}$ and $\lambda^{\mathbf{b}}$:
\begin{equation}\label{equation::repulsion_reward_augmentation}
\tilde{R}_\mathbf{c}(\tau_\mathbf{a}, \tau_\mathbf{b}) = \mathcal{R}(\tau_\mathbf{c}) + \beta \widehat{\mathrm{WD}}_\gamma( \mathbb{P}_{\pi^\mathbf{a}}^\Phi, \mathbb{P}_{\pi^{\mathbf{a}}}^\Phi ), \mathbf{c}\in \{ \mathbf{a}, \mathbf{b}\}
\end{equation}
\vspace{-6mm}

We test Algorithm \ref{Alg:repulsion_learning} on an environment consisting of a particle that needs to reach one of two goals on the plane. Policies outputs a velocity vector and stochasticity is achieved by adding Gaussian noise to it. The embedding $\Phi$ maps trajectories $\tau$ to their mean displacement along the $x-$axis. Fig. \ref{fig:repulsionplot}  shows how the policies' behavior evolves throughout optimization and how the Test Functions guide the optimization by favouring the two policies to be far apart.  The experiment details are in the Appendix (Section \ref{Sec:repulsion}). A related guided trajectory scoring approach to imitation learning is explored in Appendix \ref{section::appendix_imitation}. %\textcolor{red}{Expand?}

\subsection{Algorithms}
\label{sec:algorithms}

We propose to solve a WD-regularized objective to tackle behavior-guided policy optimization. All of our algorithms hinge on trying to maximize an objective of the form:

\vspace{-4mm}
\begin{equation}
    F(\theta) = \mathcal{L}(\theta) + \beta \mathrm{WD}_\gamma(\mathbb{P}_{\pi_\theta}^\Phi ,\mathbb{P}^{\Phi}_{\mathrm{b}}), \label{eq:rlobj}
\vspace{-2mm}    
\end{equation}

where $\mathbb{P}_\mathrm{b}^\Phi$ is a base distribution\footnote{Possibly using off policy embeddings.} over behavioral embeddings (possibly dependent on $\theta$) and $\beta \in \mathbb{R}$ could be positive or negative. Although the base distribution $\mathbb{P}_\mathrm{b}^\Phi$ could be arbitrary, our algorithms will instantiate $\mathbb{P}_\mathrm{b}^\Phi = \frac{1}{| \mathcal{S}|} \mathop{\cup}_{\pi' \in \mathcal{S}} \mathbb{P}_{\pi'}^\Phi$ for some family of policies $\mathcal{S}$ (possibly satisfying $| \mathcal{S} | = 1$) we want the optimization to attract to / repel from. % \textcolor{red}{a policy or collection of policies.} %$\mathbb{P}_\mathrm{b}^\Phi = \mathbb{P}_{\pi_0}^\Phi$ for some policy $\pi_0$ we want the optimization to attract to / repel from. \textcolor{red}{a policy or collection of policies.}

In order to compute approximate gradients for $F$, we rely on the dual formulation of the WD. After substituting the composition maps resulting from Eq.~\ref{equation::behavioral_test_functions_intuition} into Eq.~\ref{eq:rlobj}, we obtain, for \textbf{on-policy} embeddings:
\begin{equation}\label{equation::approximate_combined_objective}
     F(\theta) \approx \mathbb{E}_{\tau \sim \mathbb{P}_{\pi_\theta}}\left[ \mathcal{R}(\tau) +  \beta s_1(\tau)      \right] - \beta \mathbb{E}_{\phi \sim \mathbb{P}_{\mathrm{b}}^\Phi}\left[ 
    \lambda_2^*(\phi)   \right], 
\end{equation}
where $s_1: \Gamma \rightarrow \mathbb{R}$ equals $s_1 = \lambda_1^* \circ \Phi$, the Behavioral Test Function of policy $\pi_\theta$ and $\lambda_2^*$ is the optimal dual function of embedding distribution
%\footnote{In our applications $\mathbb{P}_{\mathbf{b}}^\Phi = \mathbb{P}_{\pi}^\Phi$ for some policy $\pi$ - possibly dependent on the parameter $\theta$.}
$\mathbb{P}_{\mathbf{b}}^\Phi$. 
Consequently $ \nabla_\theta F(\theta) \approx \nabla_\theta  \mathbb{E}_{\tau \sim \mathbb{P}_{\pi_\theta}}\left[ \mathcal{R}(\tau) + \beta  s_1(\tau)      \right]$. We learn a score function $s_1$ over trajectories that can guide our optimization by favoring those trajectories that show desired global behaviors. For \textbf{off-policy} embeddings, with state probing distributions $\mathbb{P}_\mathcal{S}$ and $\mathbb{P}_{\mathcal{S}}^b$  the analogous to Equation \ref{equation::approximate_combined_objective_off_policy} is:
%\begin{small}
\begin{align}
     F(\theta) &\approx \mathbb{E}_{\tau \sim \mathbb{P}_{\pi_\theta}}\left[ \mathcal{R}(\tau) \right]  +  \beta\mathbb{E}_{(S, \pi_\theta(S)) \sim \mathbb{P}^{\Phi_\mathcal{S}} } \left[ \lambda_1^*(S, \pi_\theta(S)) \right] \notag\\
     &\quad -\beta\mathbb{E}_{(S, \pi_b(S)) \sim \mathbb{P}_{\mathrm{b}}^{\Phi_{\mathcal{S}}}}\left[ 
    \lambda_2^*(S, \pi_b(S))   \right], \label{equation::approximate_combined_objective_off_policy}
\end{align}
%\end{small}
Consequently, if $\mathbb{P}_\mathrm{b}^{\Phi_\mathcal{S}}$ is independent from $\theta$:
\begin{equation*}
\nabla_\theta F(\theta) \approx \nabla_\theta \mathbb{E}_{\tau \sim \mathbb{P}_{\pi_\theta}}\left[ \mathcal{R}(\tau) \right] + \beta \mathbb{E}_{s \sim \mathbb{P}_{\mathcal{S}}}\left[ \nabla_\theta \lambda_1^*(s, \pi_\theta(s))  \right] .
\end{equation*}
Eq.~\ref{equation::approximate_combined_objective} and \ref{equation::approximate_combined_objective_off_policy} are approximations to the true objective from Eq.~\ref{eq:rlobj} whenever $\gamma > 0$. In practice, the entropy regularization requires a damping term $E_C(\lambda_1^*, \lambda_2^*)$ as defined in Equation \ref{equation::dual_penalty_continuous}. If $\xi( \mathbb{P}_{\pi_\theta}^\Phi, \mathbb{P}_{\mathrm{b}}^\Phi )$ is the damping joint distribution of choice  %$F(\theta) = \mathcal{L}(\theta) + \beta V$ for
%\small{
% \begin{align*}
%     V&=  \max_{\lambda_{\pi_\theta} \in \mathcal{C}(\mathcal{E}), \lambda_\mathrm{b} \in \mathcal{C}(\mathcal{E})} \mathop{\mathbb{E}}_{\tau \sim \mathbb{P}_{\pi_\theta}}\left[ \lambda_{\pi_\theta}( \Phi(\tau) )   \right] - \mathop{\mathbb{E}}_{\phi \sim \mathbb{P}_\mathrm{b}^\Phi} \left[      \lambda_{\mathrm{b}}(\phi) \right] \\
%     &\quad+\gamma \mathop{\mathbb{E}}_{\phi_1, \phi_2 \sim \xi( \mathbb{P}_{\pi_\theta}^\Phi, \mathbb{P}_{\mathrm{b}}^\Phi ) }\left[     \Lambda(\phi_1, \phi_2) \right]
%     ,
% \end{align*}
%}
 and $\rho(\phi_1, \phi_2) =\gamma \exp\left( \frac{ \lambda_{\pi_\theta}( \phi_1) - \lambda_\mathrm{b}( \phi_2 ) - C(\phi_1, \phi_2)   }{\gamma }          \right)  $ (for off policy embeddings $\phi$ is a state policy pair $(S, \pi(S))$), the damping term equals:  $\mathbb{E}_{\phi_1, \phi_2 \sim \xi( \mathbb{P}_{\pi_\theta}^\Phi, \mathbb{P}_{\mathrm{b}}^\Phi ) }\left[     \rho(\phi_1, \phi_2) \right]$. Gradients $\nabla_\theta$ through $E_C$ can be derived using a similar logic as the gradients above. When the embedding space $\mathcal{E}$ is not discrete and $\mathbb{P}_\mathrm{b}^\Phi = \mathbb{P}_\pi^\Phi$ for some policy $\pi$, we let $\xi(\mathbb{P}_{\pi_\theta}^\Phi, \mathbb{P}_{\mathrm{b}}^\Phi    ) = \mathbb{P}_{\pi_\theta}^\Phi \bigotimes \mathbb{P}_\pi^\Phi$, otherwise  $\xi(\mathbb{P}_{\pi_\theta}^\Phi, \mathbb{P}_{\mathrm{b}}^\Phi    )  = \frac{1}{|\mathcal{E}|^2}\mathrm{1}$, a uniform distribution over $\mathcal{E} \times \mathcal{E}$.

All of our methods perform a version of alternating SGD optimization: we take certain number of SGD steps over the internal dual Wasserstein objective, followed by more SGD steps over the outer objective having fixed the test functions.

\begin{comment}

This result is now subsumed by Algorithm box 1. This was in the main paper.

\textcolor{red}{
\begin{lemma}
\label{imp-lemma}
The gradient $\nabla_{\mathbf{p}^{\lambda_{\pi_{\theta}}}} F(\theta, \mathbf{p}^{\lambda_{\pi_{\theta}}}, \mathbf{p}^{\lambda_{\pi^{\prime}}})$
with respect to parameters  $\mathbf{p}^{\lambda_{\pi_{\theta}}}$ satisfies for $T=\exp\left(           \frac{ (\mathbf{p}^{\lambda_{\pi_{\theta}}})^{\top}\Phi(\tau_{1}) + (\mathbf{p}^{\lambda_{\pi^{\prime}}})^{\top}\Phi(\tau_{2}) - C(\Phi(\tau_{1}),\Phi(\tau_{2}))  }{   \gamma }     \right)$:
\begin{align}
\begin{split}
    \nabla_{\mathbf{p}^{\lambda_{\pi_{\theta}}}} F(\theta, \mathbf{p}^{\lambda_{\pi_{\theta}}}, \mathbf{p}^{\lambda_{\pi^{\prime}}})
    = \beta \mathbb{E}_{(\tau_{1},\tau_{2}) \sim \pi_{\theta} \bigotimes \pi^{\prime}}\left[ \Phi(\tau_{1}) - T   \Phi(\tau_{1})   \right],
\end{split}    
\end{align}
and an anologous results holds for $\nabla_{\mathbf{p}^{\lambda_{\pi^{\prime}}}} F(\theta, \mathbf{p}^{\lambda_{\pi_{\theta}}}, \mathbf{p}^{\lambda_{\pi^{\prime}}})$.
\end{lemma}
}

\end{comment}

We consider two approaches to optimizing this objective. Behavior-Guided Policy Gradient (BGPG) explores in the action space as in policy gradient methods \citep{trpo, schulman2017proximal}, while Behavior-Guided Evolution Strategies (BGES) considers a black-box optimization problem as in Evolution Strategies (ES, \cite{ES}).

\subsection{Behavior-Guided Policy Gradient (BGPG)}
\label{sec:bgpg}

Here we present the Behavior-Guided Policy Gradient (BGPG) algorithm (Alg. \ref{Alg:bgpg}). Specifically, we maintain a stochastic policy $\pi_\theta$ and compute policy gradients as in prior work \citep{trpo}. 

\vspace{-2mm}
\begin{algorithm}[h]
\textbf{Input: } Initialize stochastic policy $\pi_0$ parametrized by $\theta_0$, $\beta<0, \eta > 0$, $M \in \mathbb{N}$ \\
\For{$t = 1, \dots , T$ }{
   1. Run $\pi_{t-1}$ in the environment to get advantage values $A^{\pi_{t-1}}(s,a)$ and trajectories $\{\tau_i^{(t)} \}_{i=1}^{M}$ \\
   2. Update policy and test functions via several alternating policy gradient steps over $F(\theta)$. \;\\
 %Where $\tau_1 =  s_0, a_0, r_0, \cdots, s_H, a_H, r_H $.
 3. Use samples from  $\mathbb{P}_{\pi_{{t-1}}} \bigotimes \mathbb{P}_{\pi_\theta} $ and Algorithm \ref{Alg:wass_constinuous_SGD}  to update $\lambda_1, \lambda_2$ and  take SGA step $\theta_{t} = \theta_{t-1} + \eta \hat{\nabla}_\theta \hat{F}(\theta_{t-1})$
}
 \caption{Behavior-Guided Policy Gradient}
\label{Alg:bgpg}
\end{algorithm}

For \textbf{on-policy} embeddings the objective function $F(\theta)$ takes the form:
\vspace{-3mm}
\begin{small}
\begin{equation}
    F(\theta) = \mathop{\mathbb{E}}_{\tau_1, \tau_2  \sim  \mathbb{P}_{\pi_{{t-1}}} \bigotimes \mathbb{P}_{\pi_\theta}     }\Big[  \hat{R}(\tau_1, \tau_2) \Big],  
\vspace{-5mm}    
\end{equation}
\end{small}

where \begin{small}$\hat{R}(\tau_1,\tau_2) =\sum A^{\pi_{t-1}}(s_i, a_i) \frac{ \pi_\theta(a_i|s_i) }{ \pi_{t-1}(a_i|s_i) } + \widehat{\mathrm{WD}}_\gamma( \mathbb{P}_{\pi_{t-1}}^\Phi, \mathbb{P}_{\pi_{\theta}}^\Phi ) $\end{small}.
To optimize the Wasserstein distance we use Algorithm \ref{Alg:wass_constinuous_SGD}. Importantly, stochastic gradients of $F(\theta)$ can be approximated by samples from $\pi_\theta$. In its simplest form, the gradient $\hat{\nabla}_\theta \hat{F}$ can be computed by the vanilla policy gradient over the advantage component and using the REINFORCE estimator through the components involving Test Functions acting on trajectories from $\mathbb{P}_{\pi_\theta}$. %We explain in Appendix 8.1 a lower-variance gradient estimator alternative. 
For \textbf{off-policy} embeddings, $\hat{\nabla}_\theta \hat{F}$ can be computed by sampling from the product of the state probing distributions. Gradients through the differentiable test functions can be computed by the chain rule: $\nabla_\theta \lambda(S, \pi_\theta(S)) = (\nabla_{\phi} \lambda(\phi) )^\top \nabla_\theta \phi $ for $\phi = (S, \pi_\theta(S))$.

%As we explain in Section \ref{sec:exp_minmax}, other approaches are possible. 

BGPG can be thought of as a variant of Trust Region Policy Optimization with a Wasserstein penalty. As opposed to vanilla TRPO, the optimization path of BGPG flows through policy parameter space while encouraging it to follow a smooth trajectory through the geometry of the behavioral manifold. We proceed to show that given the right embedding and cost function, we can prove a monotonic improvement theorem for BGPG, showing that our methods satisfy at least similar guarantees as TRPO.  

%For a policy $\pi$, we denote as: $V^\pi$, $Q^\pi$ and $A^\pi(s,a) = Q^\pi(s,a) - V^\pi(s)$ the: value function, $Q$-function and advantage function  (see Section \ref{section::appendix_wasserstein_trust_region}). 
Furthermore, 
Let  $V(\pi)$ be the expected reward of policy $\pi$
and $\rho_{\pi}(s) = \mathbb{E}_{\tau \sim \mathbb{P}_{\pi} } \left[    \sum_{t=0}^T \mathbf{1}(s_t = s)     \right]$ be the visitation measure.% \textcolor{red}{yunhao: please check the viistation notation here.}

Two distinct policies $\pi$ and $\tilde{\pi}$ can be related via the equation (see: \cite{sutton1998introduction}) $V(\tilde{\pi}) = V(\pi)  + \int_\mathcal{S} \rho_{\tilde{\pi}}(s) \left( \int_{\mathcal{A}} \tilde{\pi}(a | s) A^\pi(s,a) da  \right) ds$ and the linear approximations to $V$ around $\pi$ via: $L(\tilde{\pi}) = V(\pi) +   \int_\mathcal{S} \rho_{\pi}(s) \left( \int_{\mathcal{A}} \tilde{\pi}(a | s) A^\pi(s,a) da  \right) ds$ (see: \cite{kakade2002approximately}). 
Let $\mathcal{S}$ be a finite set. Consider the following embedding $\Phi^s : \Gamma \rightarrow \mathbb{R}^{|\mathcal{S}|}$ defined by $\left( \Phi(\tau ) \right)_s=    \sum_{t=0}^T \mathbf{1}(s_t = s)$ and related cost function defined as: $C(\mathbf{v},\mathbf{w}) = \|\mathbf{v}-\mathbf{w}\|_1$. Then $\mathrm{WD}_0( \mathbb{P}_{\tilde{\pi}}^{\Phi^s}, \mathbb{P}_{\pi}^{\Phi^s})$  is related to visitation frequencies since $\mathrm{WD}_0(  \mathbb{P}_{\tilde{\pi}}^{\Phi^s}, \mathbb{P}_{\pi}^{\Phi^s})\geq \sum_{s \in \mathcal{S}} |  \rho_\pi(s) - \rho_{\tilde{\pi}}(s) | $. These observations enable us to prove an analogue of Theorem 1 from \cite{trpo} (see Section \ref{section::appendix_wasserstein_trust_region}
for the proof), namely:
\begin{theorem}\label{theorem::policy_improvement_wasserstein_0}
If $ \mathrm{WD}_0(  \mathbb{P}_{\tilde{\pi}}^{\Phi^s}, \mathbb{P}_{\pi}^{\Phi^s}) \leq \delta$ and $\epsilon  = \max_{s,a} | A^\pi(s,a)  |$, then $V(\tilde{\pi}) \geq L(\tilde{\theta}) - \delta \epsilon$.  %\textcolor{red}{$\lambda$ not defined}.
\end{theorem}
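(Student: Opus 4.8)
The plan is to mirror the proof of Theorem~1 in \cite{trpo}, but to control the gap between the visitation measures $\rho_\pi$ and $\rho_{\tilde\pi}$ using the Wasserstein inequality recorded just above the statement instead of a total-variation or KL bound.

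First I would subtract the two identities quoted in the excerpt, $V(\tilde\pi) = V(\pi) + \int_\mathcal{S}\rho_{\tilde\pi}(s)\,\bar A(s)\,ds$ and $L(\tilde\pi) = V(\pi) + \int_\mathcal{S}\rho_\pi(s)\,\bar A(s)\,ds$, where $\bar A(s) := \int_\mathcal{A}\tilde\pi(a\mid s)A^\pi(s,a)\,da$, to get
\[
V(\tilde\pi) - L(\tilde\pi) \;=\; \int_\mathcal{S}\bigl(\rho_{\tilde\pi}(s) - \rho_\pi(s)\bigr)\,\bar A(s)\,ds .
\]
Because $\bar A(s)$ is the expectation of $A^\pi(s,\cdot)$ under the probability measure $\tilde\pi(\cdot\mid s)$, we have $|\bar A(s)| \le \max_{s,a}|A^\pi(s,a)| = \epsilon$ for every $s$, so, using finiteness of $\mathcal{S}$,
\[
\bigl|V(\tilde\pi) - L(\tilde\pi)\bigr| \;\le\; \epsilon \sum_{s\in\mathcal{S}}\bigl|\rho_{\tilde\pi}(s) - \rho_\pi(s)\bigr| .
\]

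Next I would plug in the bound $\sum_{s\in\mathcal{S}}|\rho_\pi(s)-\rho_{\tilde\pi}(s)| \le \mathrm{WD}_0(\mathbb{P}_{\tilde\pi}^{\Phi^s},\mathbb{P}_\pi^{\Phi^s})$ stated immediately before the theorem. For completeness one re-derives it by noting that for any coupling of $\mathbb{P}_{\tilde\pi}^{\Phi^s}$ and $\mathbb{P}_\pi^{\Phi^s}$, Jensen's inequality gives $\mathbb{E}\,\|\mathbf{v}-\mathbf{w}\|_1 \ge \|\mathbb{E}\mathbf{v}-\mathbb{E}\mathbf{w}\|_1$, and that $\mathbb{E}_{\tau\sim\mathbb{P}_\pi}[\Phi^s(\tau)] = (\rho_\pi(s))_{s\in\mathcal{S}}$ by the definition of $\rho_\pi$; minimizing over couplings yields the inequality. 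Combining this with the hypothesis $\mathrm{WD}_0(\mathbb{P}_{\tilde\pi}^{\Phi^s},\mathbb{P}_\pi^{\Phi^s}) \le \delta$ gives $\sum_s|\rho_{\tilde\pi}(s)-\rho_\pi(s)| \le \delta$, hence $|V(\tilde\pi)-L(\tilde\pi)| \le \delta\epsilon$, and in particular $V(\tilde\pi) \ge L(\tilde\pi) - \delta\epsilon$, which is the assertion (the $\tilde\theta$ in the statement being the parameter of $\tilde\pi$).

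I do not expect a real obstacle in this argument; the one place to be careful is bookkeeping with the undiscounted, finite-horizon visitation measure $\rho_\pi$, making sure that $\mathbb{E}_{\tau\sim\mathbb{P}_\pi}[\Phi^s(\tau)]$ equals $\rho_\pi$ on the nose, so that the Jensen step produces exactly $\|\rho_{\tilde\pi}-\rho_\pi\|_1$ and not a reweighted quantity, and that the $V$/$L$ identities are used in the same undiscounted normalization as the rest of the paper. Finiteness of all the integrals and sums is automatic from $|\mathcal{S}|<\infty$, the finite horizon, and boundedness of $A^\pi$.
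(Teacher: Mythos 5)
Your proposal is correct and follows essentially the same route as the paper: subtract the $V$ and $L$ identities to express the gap as $\sum_{s}(\rho_{\tilde\pi}(s)-\rho_{\pi}(s))\sum_{a}\tilde\pi(a\mid s)A^{\pi}(s,a)$, bound the averaged advantage by $\epsilon$ and the $\ell_1$ visitation gap by $\mathrm{WD}_0(\mathbb{P}_{\tilde\pi}^{\Phi^s},\mathbb{P}_{\pi}^{\Phi^s})\le\delta$ via the coordinatewise coupling/Jensen argument (the paper phrases the last step as H\"older plus its preceding lemma, which is the same estimate). No gaps.
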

As in \cite{trpo}, Theorem \ref{theorem::policy_improvement_wasserstein_0} implies a policy improvement guarantee for BGPG.

\subsection{Behavior Guided Evolution Strategies (BGES)}
\label{sec:bges}

ES takes a black-box optimization approach to RL, by considering a rollout of a policy, parameterized by $\theta$ as a black-box function $F$. This approach has gained in popularity recently \citep{ES, horia, rbo2019}. 
\vspace{-3mm}

\begin{algorithm}[H]
    \textbf{Input:} learning rate $\eta$, noise standard deviation $\sigma$, iterations $T$, BEM $\Phi$, $\beta$ ($>0$ for repulsion, $<0$ for imitation). \\
    \textbf{Initialize:} Initial policy $\pi_0$ parametrized by $\theta_0$, Behavioral Test Functions $\lambda_1, \lambda_2$. Evaluate policy $\pi_0$ to return trajectory $\tau_0$\; \\% and subsequently use the BEM to produce an initial $\hat{\mathbb{P}}_{\pi_0}^\Phi$. \; \\
    \For{$t= 1, \ldots, T-1$}{
      1. Sample $\epsilon_1, \cdots, \epsilon_n$ independently from $\mathcal{N}(0,I)$. \;\\
      2. Evaluate policies $\{\pi_{t}^k\}_{k=1}^n$ parameterized by $\{\theta_{t} + \sigma \epsilon_k\}_{k=1}^n$, get rewards $R_k$ and trajectories $\tau_k$ for all $k$. \; \\
    %   3. Use BEM to map trajectories $\tau_k$ to produce empirical PPEs $\hat{\mathbb{P}}_{\pi_t^k}^\Phi$ for all $k$. \; \\
      3. Update $\lambda_1$ and $\lambda_2$ using Algorithm \ref{Alg:wass_constinuous_SGD}.\; \\%, where $\mu = \frac{1}{n}\cup_{k=1}^n \hat{ \mathbb{P}}_{\pi_{t-1}^k}^\Phi$ and $\nu = \frac{1}{n}\cup_{k=1}^n \hat{ \mathbb{P}}_{\pi_t^k}^\Phi $ are the uniform distribution over the set of PPEs from 3 for $t-1$ and $t$. \; \\
      4. Approximate $\widehat{\mathrm{WD}}\gamma(\mathbb{P}_{\pi_t^k}^\Phi ,\mathbb{P}^{\Phi}_{\pi_t})$ plugging in $\lambda_1, \lambda_2$ into Eq. \ref{equation::empirical_smooth_wass} for each perturbed policy $\pi_k$ \; \\
      5. Update Policy: $\theta_{t+1} = \theta_{t} + \eta \nabla_{ES} F$, where: \; \\
      \vspace{-.7cm}
      \begin{small}
       \begin{align*}
          \nabla_{ES} F = \frac{1}{\sigma}\sum_{k=1}^n [(1-\beta)(R_k - R_t) + \beta \widehat{\mathrm{WD}}_\gamma(\mathbb{P}_{\pi_t^k}^\Phi ,\mathbb{P}^{\Phi}_{\pi_t})]\epsilon_k
      \end{align*}
     \end{small}
      \vspace{-2mm}
     }
     \caption{Behavior-Guided Evolution Strategies}
    \label{Alg:bges}
    \end{algorithm}
\vspace{-5mm}

If we take this approach to optimizing the objective in Eq.~\ref{eq:rlobj}, the result is a black-box optimization algorithm which seeks to maximize the reward and simultaneously maximizes or minimizes the difference in behavior from the base embedding distribution $\mathbb{P}_\mathrm{b}^\Phi$. We call it Behavior-Guided Evolution Strategies (BGES) algorithm (see: Alg. \ref{Alg:bges}).

When $\beta > 0$, and we take $\mathbb{P}_\mathrm{b}^\Phi = \mathbb{P}_{\pi_{t-1}}^\Phi$, BGES resembles the NSR-ES algorithm from \cite{conti}, an instantiation of \textit{novelty search} \citep{Lehman08}. The positive weight on the WD-term enforces newly constructed  policies to be behaviorally different from the previous ones while the $\mathcal{R}-$term drives the optimization to maximize the reward. The key difference in our approach is the probabilistic embedding map, with WD rather than Euclidean distance. We show in Section \ref{sec:exp_maxmax} that BGES outperforms NSR-ES for challenging exploration tasks.

\section{Related Work} 
\label{sec:related}

Our work is related to research in multiple areas in neuroevolution and machine learning:
\vspace{-3mm}
\paragraph{Behavior Characterizations:} The idea of directly optimizing for behavioral diversity was introduced by \cite{Lehman08} and \cite{lehman:thesis}, who proposed to search directly for \textit{novelty}, rather than simply assuming it would naturally arise in the process of optimizing an objective function. This approach has been applied to deep RL \citep{conti} and meta-learning \citep{evolvabilityES}. In all of this work, the policy is represented via a behavioral characterization (BC), which requires domain knowledge. In our setting, we move from deterministic BCs to stochastic behavioral embeddings, thus requiring the use of metrics capable of comparing probabilistic distributions. 
\vspace{-3mm}

\paragraph{Distance Metrics:} WDs have been used in many applications in machine learning where  guarantees based on distributional similarity are required~\citep{jiang2019wasserstein,wgan}. We make use of WDs in our setting for a variety of reasons. First and foremost, the dual formulation of the WD allows us to recover Behavioral Test Functions, providing us with behavior-driven trajectory scores. In contrast to KL divergences, WDs are sensitive to user-defined costs between pairs of samples instead of relying only on likelihood ratios.  Furthermore, as opposed to KL divergences, it is possible to take SGD steps using entropy-regularized Wasserstein objectives. Computing an estimator of the KL divergence is hard without a density model. Since in our framework multiple unknown trajectories may map to the same behavioral embedding, the likelihood ratio between two embedding distributions may be ill-defined.    
\vspace{-3mm}

\paragraph{WDs for RL:} We are not the first to propose using WDs in RL. \cite{zhang} have recently introduced Wasserstein Gradient Flows (WGFs), which casts policy optimization as gradient descent flow on the manifold of corresponding probability measures, where geodesic lengths are given as second-order WDs. We note that computing WGFs is a nontrivial task. In \cite{zhang} this is done via particle approximation methods, which we show in Section \ref{sec:exp} is substantially slower than our methods. The WD has also been employed to replace KL terms in standard Trust Region Policy Optimization~\citep{maginnis}. This is a very special case of our more generic framework (cf.\ Section \ref{sec:algorithms}). In \cite{maginnis} it is suggested to solve the corresponding RL problems via Fokker-Planck equations and diffusion processes, yet no empirical evidence of the feasibility of this approach is provided. We propose general practical algorithms and provide extensive empirical evaluation.
\vspace{-4mm}

\paragraph{Distributional RL} Distributional RL (DRL, \cite{Bellemare2017}) expands on traditional off-policy methods \citep{dqn2013} by attempting to learn a distribution of the return from a given state, rather than just the expected value. These approaches have impressive experimental results \citep{Bellemare2017, dabney18a}, with a growing body of theory \citep{rowland18a, qu19b, bellemare19a, rowland19statistics}. Superficially it may seem that learning a distribution of returns is similar to our approach to PPEs, when the BEM is a distribution over rewards. Indeed, reward-driven embeddings used in DRL can be thought of as special cases of the general class of BEMs. We note two key differences: 1) DRL methods are off-policy whereas our BGES and BGPG algorithms are on-policy, and 2) DRL is typically designed for discrete domains, since Q-Learning with continuous action spaces is generally much harder. Furthermore, we note that while the WD is used in DRL, it is only for the convergence analysis of the DRL algorithm \citep{Bellemare2017}.
\vspace{-2mm}
\section{Experiments}
\label{sec:exp}

Here we seek to test whether our approach to RL translates to performance gains for by evaluating BGPG and BGES, versus their respective baselines for a range of tasks. For each subsection we provide additional details in the Appendix.
\vspace{-2mm}

\subsection{Behavior-Guided Policy Gradient}
\label{sec:exp_minmax}

Our key question is whether our techniques lead to outperformance for BGPG vs. baseline TRPO methods using KL divergence, which are widely used in the reinforcement learning community. For the BEM, we use the concatenation-of-actions, as used already in TRPO. We consider a variety of challenging problems from the DeepMind Control Suite \citep{tassa2018deepmind} and Roboschool (RS). In Fig. \ref{figure:trpo_fullwassersteinupdate} we see that BGPG does indeed outperform KL-based TRPO methods, with gains across all six environments. We also confirm results from \citep{trpo} that a trust region typically improves performance. 
\vspace{-3mm}

\begin{figure}[H]
    \centering\begin{minipage}{0.95\textwidth}
    \subfigure[\textbf{HalfCheetah}]{\includegraphics[width=.3\textwidth]{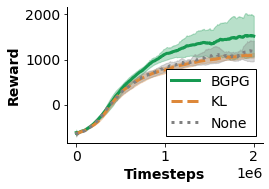}}
    \subfigure[\textbf{Ant}]{\includegraphics[width=.3\textwidth]{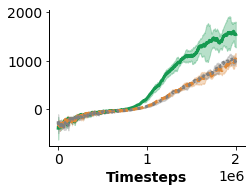}}
   \subfigure[\textbf{Hopper: Hop}]{\includegraphics[width=.3\textwidth]{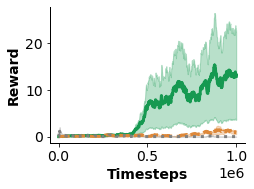}}
    \end{minipage}
    \centering\begin{minipage}{0.95\textwidth}
   \subfigure[\textbf{RS: HalfCheetah}]{\includegraphics[width=.3\textwidth]{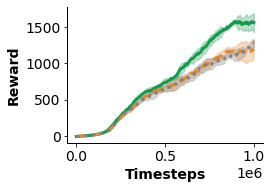}}
    \subfigure[\textbf{Walker: Stand}]{\includegraphics[width=.3\textwidth]{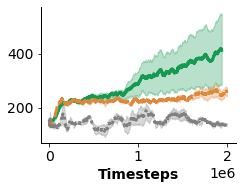}}
    \subfigure[\textbf{RS: Walker2d}]{\includegraphics[width=.3\textwidth]{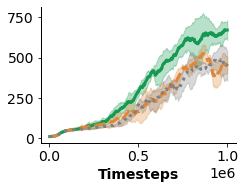}}
    \end{minipage}
    \vspace{-2mm}
    \caption{\small{\textbf{BGPG vs. TRPO:} We compare BGPG and TRPO (KL divergence) on several continuous control tasks. As a baseline we also include results without a trust region ($\beta = 0$ in Algorithm \ref{Alg:bgpg}). Plots show the $\text{mean} \pm \text{std}$ across 5 random seeds.}}
    \label{figure:trpo_fullwassersteinupdate}
\vspace{-3mm}    
\end{figure}

\paragraph{Wall Clock Time:} To illustrate computational benefits of alternating optimization of WD in BGPG, we compare it to the method introduced in \cite{zhang}. In practice, the WD across different state samples can be optimized in a batched manner, details of which are in the Appendix. In Table \ref{table:clocktime} we see that BGPG is substantially faster.

%\vspace{-.8cm}
\begin{table}[h]
\caption{\small{Clock time (s) to achieve a normalized reward of $90\%$ of the best achieved. All experiments were run on the same CPU.}}
    \centering
    \scalebox{0.9}{
    \begin{tabular}{l*{3}{c}r}
    \toprule
    & \cite{zhang} & BGPG   \\
    \midrule
    $\mathrm{Pendulum}$ &  3720 & 777 \\
    $\mathrm{Hopper}$: $\mathrm{Stand}$ &  26908 & 10817 \\
    $\mathrm{Hopper}$: $\mathrm{Hop}$ & 23542 & 12820 \\
    $\mathrm{Walker}$: $\mathrm{Stand}$ & 13497 & 4082 \\
    \bottomrule
\end{tabular}}
\end{table}
\label{table:clocktime}

\vspace{-3mm}
\subsection{Behavior-Guided Evolution Strategies}
\label{sec:exp_maxmax}

Next we seek to evaluate the ability for BGES to use its behavioral repulsion for exploration.

\vspace{-3mm}
\paragraph{Deceptive Rewards} A common challenge in RL is \textit{deceptive} rewards. These arise since agents can only learn from data gathered via experience in the environment. To test BGES in this setting, we created two intentionally deceptive environments. In both cases the agent is penalized at each time step for its distance from a goal. The deception comes from a barrier, which means initially positive rewards from moving directly forward will lead to a suboptimal policy. 

We consider two agents---a two-dimensional $\mathrm{point}$ and a larger $\mathrm{quadruped}$. Details are provided in the Appendix (Section \ref{app:exp}). We compare with state-of-the-art \textbf{on-policy} methods for exploration: 
$\mathrm{NSR}$-$\mathrm{ES}$ \citep{conti}, which assumes the BEM is deterministic and uses the Euclidean distance to compare policies, and $\mathrm{NoisyNet}$-$\mathrm{TRPO}$ \cite{noisynet18}. We used the reward-to-go and final state BEMs for the $\mathrm{quadruped}$ and $\mathrm{point}$ respectively.

\begin{figure}[H]
    \begin{minipage}{0.95\textwidth}
	\centering\subfigure[\textbf{Quadruped}]{\includegraphics[width=0.3\textwidth]{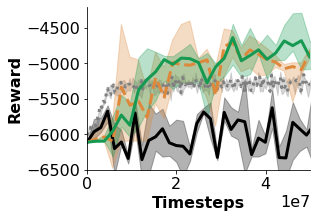}}
	\centering\subfigure[\textbf{Point}]{\includegraphics[width=0.29\textwidth]{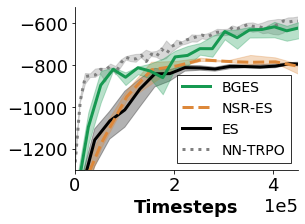}}
	\end{minipage}
	\caption{\small{\textbf{Deceptive Rewards.} Plots show the $\text{mean} \pm \text{std}$ across 5 random seeds for two environments: $\mathrm{Quadruped}$ and $\mathrm{Point}$.}}
	\label{fig:point} 
\end{figure}

Policies avoiding the wall correspond to rewards: $R>-5000$ and $R>-800$ for the $\mathrm{quadruped}$ and $\mathrm{point}$ respectively. In the prior case an agent needs to first learn how to walk and the presence of the wall is enough to prohibit vanilla ES from even learning forward locomotion. As we see in Fig. \ref{fig:point}, BGES is the only method that drives the agent to the goal in \textit{both} settings.

\vspace{-3mm}
\section{Conclusion and Future Work}
\label{sec:con}

In this paper we proposed a new paradigm for on-policy learning in RL, where policies are embedded into expressive latent behavioral spaces and the optimization is conducted
by utilizing the repelling/attraction signals in the corresponding probabilistic distribution spaces. The use of Wasserstein distances (WDs) guarantees flexibility in choosing cost funtions between embedded policy trajectories, enables stochastic gradient steps through corresponding regularized objectives (as opposed to KL divergence methods) and provides an elegant method, via their dual formulations, to quantify behaviorial difference of policies through the behavioral test functions. Furthermore, the dual formulations give rise to efficient algorithms optimizing RL objectives regularized with WDs. 

We also believe the presented methods shed new light on several other challenging problems of modern RL, including: learning with safety guarantees (a repelling signal can be used to enforce behaviors away from dangerous ones) or anomaly detection for reinforcement learning agents (via the above score functions). Finally, we are interested in extending our method to the off policy setting.

\bibliographystyle{abbrv}
\bibliography{wasserstein}

%%%%%%%%%%%%%%%%%%%%%%%%%%%%%%%%%%%%%%%%%%%%%%%%%%%%%%%%%%%%%%%%%%%%%%%%%%%%%%%
%%%%%%%%%%%%%%%%%%%%%%%%%%%%%%%%%%%%%%%%%%%%%%%%%%%%%%%%%%%%%%%%%%%%%%%%%%%%%%%
% DELETE THIS PART. DO NOT PLACE CONTENT AFTER THE REFERENCES!
%%%%%%%%%%%%%%%%%%%%%%%%%%%%%%%%%%%%%%%%%%%%%%%%%%%%%%%%%%%%%%%%%%%%%%%%%%%%%%%
%%%%%%%%%%%%%%%%%%%%%%%%%%%%%%%%%%%%%%%%%%%%%%%%%%%%%%%%%%%%%%%%%%%%%%%%%%%%%%%
\appendix

\onecolumn

\section*{Appendix: Behavior-Guided Reinforcement Learning}

\section{Additional Experiments}

\begin{figure}[H]
\vspace{-3mm} 
    \begin{minipage}{0.99\textwidth}
	\centering\subfigure{\includegraphics[width=0.39\textwidth]{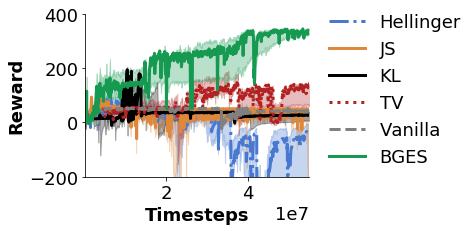}}
	\end{minipage}
	\caption{\small{\textbf{Escaping Local Maxima} A comparison of $\mathrm{BGES}$ with those using different distances on Policy Embeddings.}}
	\label{fig:escape}
\vspace{-2mm} 	
\end{figure}

\vspace{-1mm}
\subsection{Escaping Local Maxima.} 

In Fig. \ref{fig:escape} we compare our methods with methods using regularizers based on other distances or divergences (specifically, Hellinger, Jensen-Shannon (JS), KL and Total Variation (TV) distances), as well as vanilla ES (i.e., with no distance regularizer). 
Experiments were performed on a \textbf{Swimmer} environment from $\mathrm{OpenAI}$ $\mathrm{Gym}$ \citep{brockman2016openai}, where the number of samples of the ES optimizer was drastically reduced. BGES is the only one that manages to obtain good policies which also proves that the benefits come here not just from introducing the regularizer, but from its particular form.

\vspace{-2mm}

\subsection{Imitation Learning}
\label{imitation_exp}

\begin{figure}[H]
\vspace{-3mm} 
    \begin{minipage}{0.99\textwidth}
	\centering\subfigure{\includegraphics[width=0.39\textwidth]{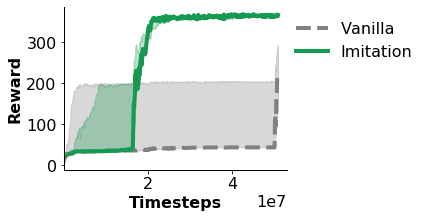}}
	\end{minipage}
    \caption{\small{\textbf{Imitation Learning.}}}
	\label{fig:imit}
\vspace{-2mm} 	
\end{figure}

As discussed in Section \ref{sec:bgpg}, we can also utilize the BGES algorithm for imitation learning, by setting $\beta < 0$, and using an expert's trajectories for the Policy Embedding. For this experiment we use the reward-to-go BEM (Section \ref{sec:wrl}). In Fig.~\ref{fig:imit}, we show that this approach significantly outperforms vanilla ES on the $\mathrm{Swimmer}$ task. Although conceptually simple, we believe this could be a powerful approach with potential extensions, for example in designing safer algorithms. 
\vspace{-2mm}

\newpage
\section{Further Experimental Details}
\label{app:exp}

\subsection{BGPG}
Here we reproduce a full version of Algorithm \ref{Alg:bgpg}:

\begin{algorithm}[H]
\textbf{Input: } Initialize stochastic policy $\pi_0$ parametrized by $\theta_0$, $\beta<0, \eta > 0$, $M,L\in \mathbb{N}$ \\
\For{$t = 1, \dots , T$ }{
   1. Run $\pi_{t-1}$ in the environment to get advantage values $A^{\pi_{t-1}}(s,a)$ and trajectories $\{\tau_i^{(t)} \}_{i=1}^{M}$ \\
   2. Update policy and test functions via several alternating gradient steps over the objective:
  \begin{small}
  \begin{align*}
    F(\theta) &= \mathop{\mathbb{E}}_{\tau_1, \tau_2  \sim  \mathbb{P}_{\pi_{{t-1}}} \bigotimes \mathbb{P}_{\pi_\theta}     }\Big[ \sum_{i=1}^H  A^{\pi_{t-1}}(s_i, a_i) \frac{ \pi_\theta(a_i|s_i) }{ \pi_{t-1}(a_i|s_i) }    \\
    &+  \beta \lambda_1(\Phi(  \tau_1 )) 
    - \beta \lambda_2(\Phi(\tau_2) ) + \beta \gamma \exp\left( \frac{\lambda_1(\Phi(\tau_1)) - \lambda_2(\Phi(\tau_2)) - C(\Phi(\tau_1)), \Phi(\tau_2))}{\gamma}      \right)  \Big]  
  \end{align*} \end{small}\;\\
 Where $\tau_1 =  s_0, a_0, r_0, \cdots, s_H, a_H, r_H $. Let $\theta_{t-1}^{(0)}= \theta_{t-1}$. \; \\
 \For{ $\ell = 1, \cdots , L$ }{
 a. Approximate $\mathbb{P}_{\pi_{{t-1}}} \bigotimes \mathbb{P}_{\pi_\theta}$ via $\frac{1}{M} \{  \tau_i^{(t)}\}_{i=1}^M\bigotimes \frac{1}{M} \{ \tau_i^\theta \}_{i=1}^M:= \hat{P}_{\pi_t, \pi_\theta}$ where $\tau_i^\theta \stackrel{\text{i.i.d}}{\sim}\mathbb{P}_{\pi_\theta}$\;\\
 b. Take SGA step $\theta_{t-1}^{(\ell)} = \theta_{t-1}^{(\ell - 1)} + \eta \hat{\nabla}_\theta \hat{F}(\theta_{t-1}^{(\ell -1)})$  using samples from $\hat{P}_{\pi_{t-1}, \pi_\theta}$. \;\\
  c. Use samples from $\hat{P}_{\pi_{t-1}, \pi_\theta}$ and Algorithm \ref{Alg:wass_constinuous_SGD}  to update $\lambda_1, \lambda_2$. \;\\
  }
}
Set $\theta_t = \theta_{t-1}^{(M)}$.
 \caption{Behvaior-Guided Policy Gradient with \textbf{On-Policy} Embeddings}
\label{Alg:bgpg_full}
\end{algorithm}

\begin{algorithm}[H]
\textbf{Input: } Initialize stochastic policy $\pi_0$ parametrized by $\theta_0$, $\beta<0, \eta > 0$, $M,L\in \mathbb{N}$, state probing distribution $\mathbb{P}_{\mathcal{S}_0}$. \\
\For{$t = 1, \dots , T$ }{
   1. Run $\pi_{t-1}$ in the environment to get advantage values $A^{\pi_{t-1}}(s,a)$ .
   2. Update policy and test functions via several alternating gradient steps over the objective:
  \begin{align*}
    F(\theta) &= \mathop{\mathbb{E}}_{\tau_1 \sim  \mathbb{P}_{\pi_{{t-1}}}    }\Big[ \sum_{i=1}^H  A^{\pi_{t-1}}(s_i, a_i) \frac{ \pi_\theta(a_i|s_i) }{ \pi_{t-1}(a_i|s_i) } \Big]    \\
    &+ \mathbb{E}_{s_1, s_2 \sim \mathbb{P}_{\mathcal{S}_\theta} \bigotimes \mathbb{P}_{\mathcal{S}_{t-1}}}\Big[ \beta \lambda_1(s_1, \pi_\theta(s_1) ) - \beta \lambda_2( s_2, \pi_{t-1}(s_2) ) \\
    &+\beta \gamma \exp\left( \frac{\lambda_1(s_1, \pi_\theta(s_1)) - \lambda_2(s_2, \pi_{t-1}(s_2)) - C((s_1, \pi_{\theta}(s_1)), (s_2, \pi_{t-1}(s_2)))}{\gamma}      \right)  \Big]  
  \end{align*} \;\\
 Where $\tau_1 =  s_0, a_0, r_0, \cdots, s_H, a_H, r_H $. Let $\theta_{t-1}^{(0)}= \theta_{t-1}$. \; \\
 \For{ $\ell = 1, \cdots , L$ }{
 a. Approximate the expectation $\mathbb{E}_{s_1, s_2 \sim \mathbb{P}_{\mathcal{S}_\theta} \bigotimes \mathbb{P}_{\mathcal{S}_{t-1}}}$ via $2M$ samples. \;\\
 b. Take SGA step $\theta_{t-1}^{(\ell)} = \theta_{t-1}^{(\ell - 1)} + \eta \hat{\nabla}_\theta \hat{F}(\theta_{t-1}^{(\ell -1)})$  using samples from a. and trajectories from current $\pi_\theta$. \;\\
  c. Use samples from $a.$ and Algorithm \ref{Alg:wass_constinuous_SGD}  to update $\lambda_1, \lambda_2$. \;\\
  }
}
Set $\theta_t = \theta_{t-1}^{(M)}$.
 \caption{Behvaior-Guided Policy Gradient with \textbf{Off-Policy} Embeddings}
\label{Alg:bgpg_full_off_policy}
\end{algorithm}

\paragraph{A Lower-variance Gradient Estimator via \textbf{Off-Policy} embeddings:} As explained in Section 5.2, the BGPG considers an objective which involves two parts: the conventional surrogate loss function for policy optimization \citep{schulman2017proximal}, and a loss function that involves the Behavior Test Functions. Though we could apply vanilla reinforced gradients on both parts, it is straightforward to notice that the second part can be optimized with reparameterized gradients \citep{kingma2013auto}, which arguably have lower variance compared to the reinforced gradients. In particular, we note that under random feature approximation (\ref{equation::empirical_smooth_wass}), as well as the action-concatenation embedding, the Wasserstein distance loss $\widehat{\text{WD}}_\gamma(P_{\pi_\theta}^\Phi,P_b^\Phi)$ is a differentiable function of $\theta$. To see this more clearly, notice that under a Gaussian policy $a \sim \mathcal{N}(\mu_\theta(s), \sigma_\theta(s)^2)$  the actions $a = \mu_\theta(s) + \sigma_\theta(s) \cdot \epsilon$ are reparametrizable for $\epsilon$ being standard Gaussian noises. We can directly apply the reparametrization trick to this second objective to obtain a gradient estimator with potentially much lower variance. In our experiments, we applied this lower-variance gradient estimator. In Algorithm \ref{Alg:bgpg_full_off_policy} we allow the state probing distribution to evolve with the iteration index of the algorithm $t$.

\paragraph{Trust Region Policy Optimization:} Though the original TRPO \citep{trpo} construct the trust region based on KL-divergence, we propose to construct the trust region with WD. For convenience, we adopt a dual formulation of the trust region method and aim to optimize the augmented objective $\mathbb{E}_{\tau\sim\pi_\theta}[R(\tau)] - \beta \text{WD}_\gamma (\mathbb{P}_{\pi^\prime}^\Phi,\mathbb{P}_{\pi_\theta}^\Phi)$. We apply the concatenation-of-actions embedding and random feature maps to calculate the trust region. We identify several important hyperparameters: the RKHS (for the test function) is produced by RBF kernel $k(x,y) = \exp(\Vert x-y\Vert_2^2 / \sigma^2)$  with $\sigma = 0.1$; the number of random features is $D=100$; recall the embedding is $\Phi(\tau) = [a_1,a_2...a_H]$ where $H$ is the horizon of the trajectory, here we take $10$ actions per state and embed them together, this is equivalent to reducing the variance of the gradient estimator by increasing the sample size; the regularized entropy coefficient in the WD definition as $\gamma = 0.1$; the trust region trade-off constant $\beta \in \{0.1,1,10\}$. The alternate gradient descent is carried out with $T=100$ alternating steps and test function coefficients $\mathbf{p}\in \mathbb{R}^D$ are updated with learning rate $\alpha_\mathbf{p} = 0.01$. 

The baseline algorithms are: No trust region, and trust region with KL-divergence. The KL-divergence is identified by a maximum KL-divergence threshold per update, which we set to $\epsilon = 0.01$.

Across all algorithms, we adopt the open source implementation \citep{baselines}. Hyper-parameters such as number of time steps per update as well as implementation techniques such as state normalization are default in the original code base.

The additional experiment results can be found in Figure \ref{figure:trpoappendix} where we show comparison on additional continuous control benchmarks: Tasks with DM are from DeepMind Contol Suites \citep{tassa2018deepmind}. We see that the trust region constructed from the WD consistently outperforms other baselines (importantly, trust region methods are always better than the baseline without trust region, this confirms that trust region methods are critical in stabilizing the updates).

\begin{figure}[H]
\centering
    \centering\subfigure[\textbf{Reacher}]{\includegraphics[width=.24\linewidth]{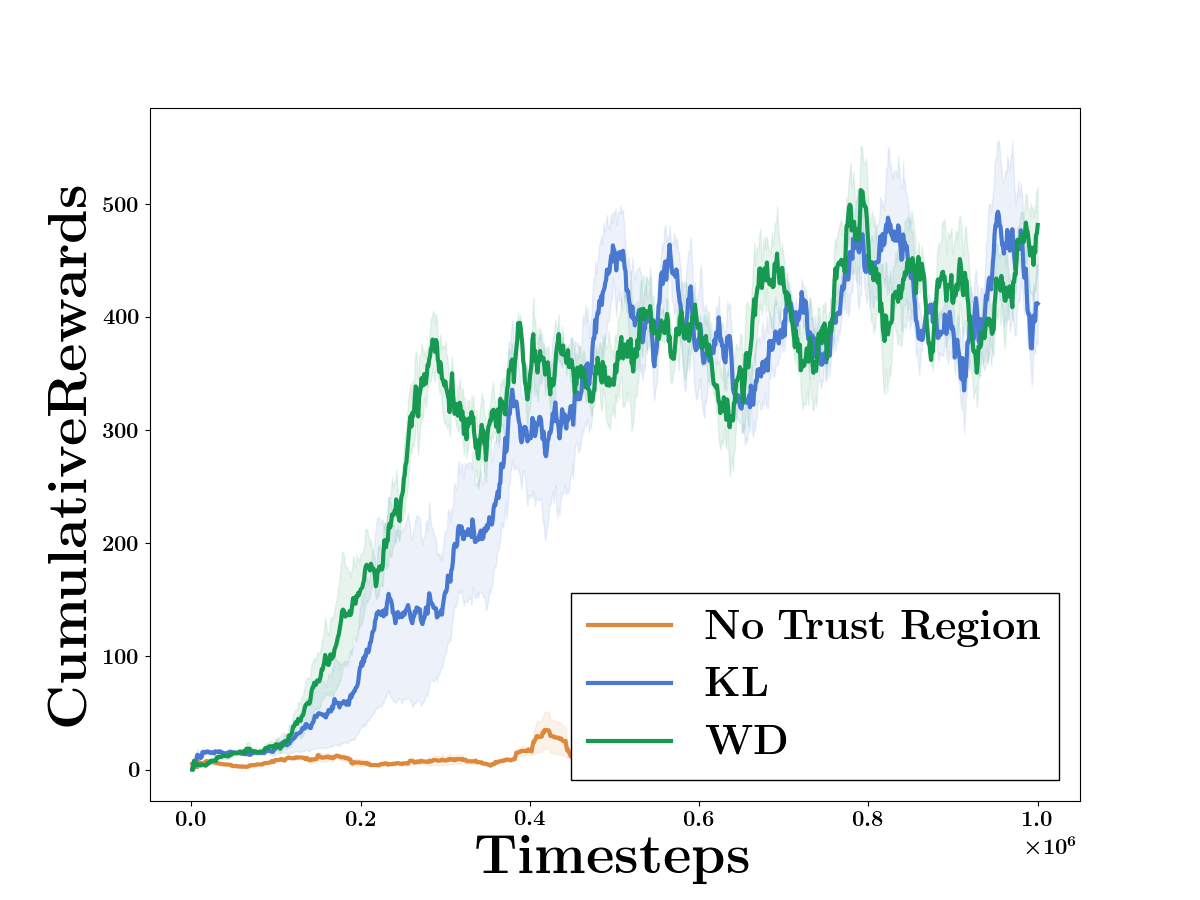}}
    \centering\subfigure[\textbf{MountainCar}]{\includegraphics[width=.24\linewidth]{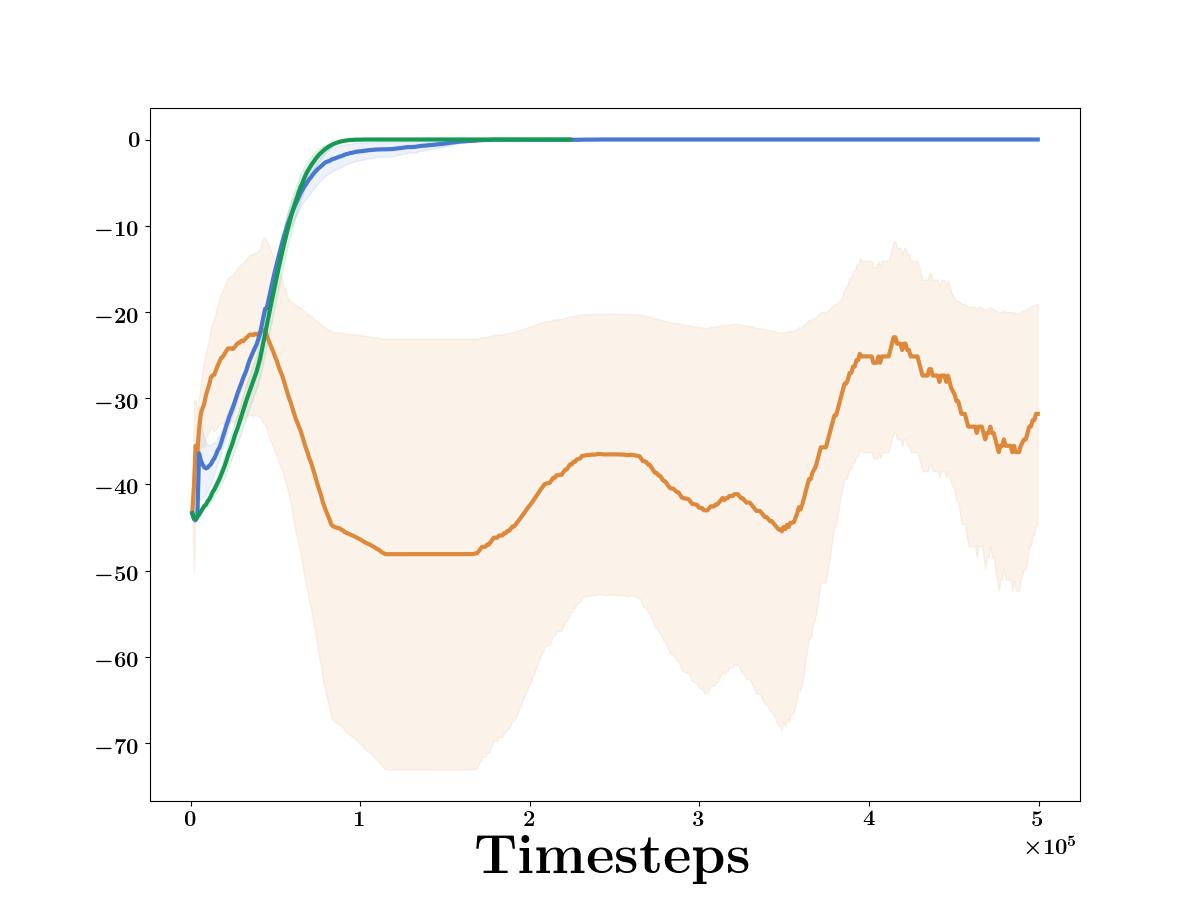}}
    \centering\subfigure[\textbf{Acrobot}]{\includegraphics[width=.24\linewidth]{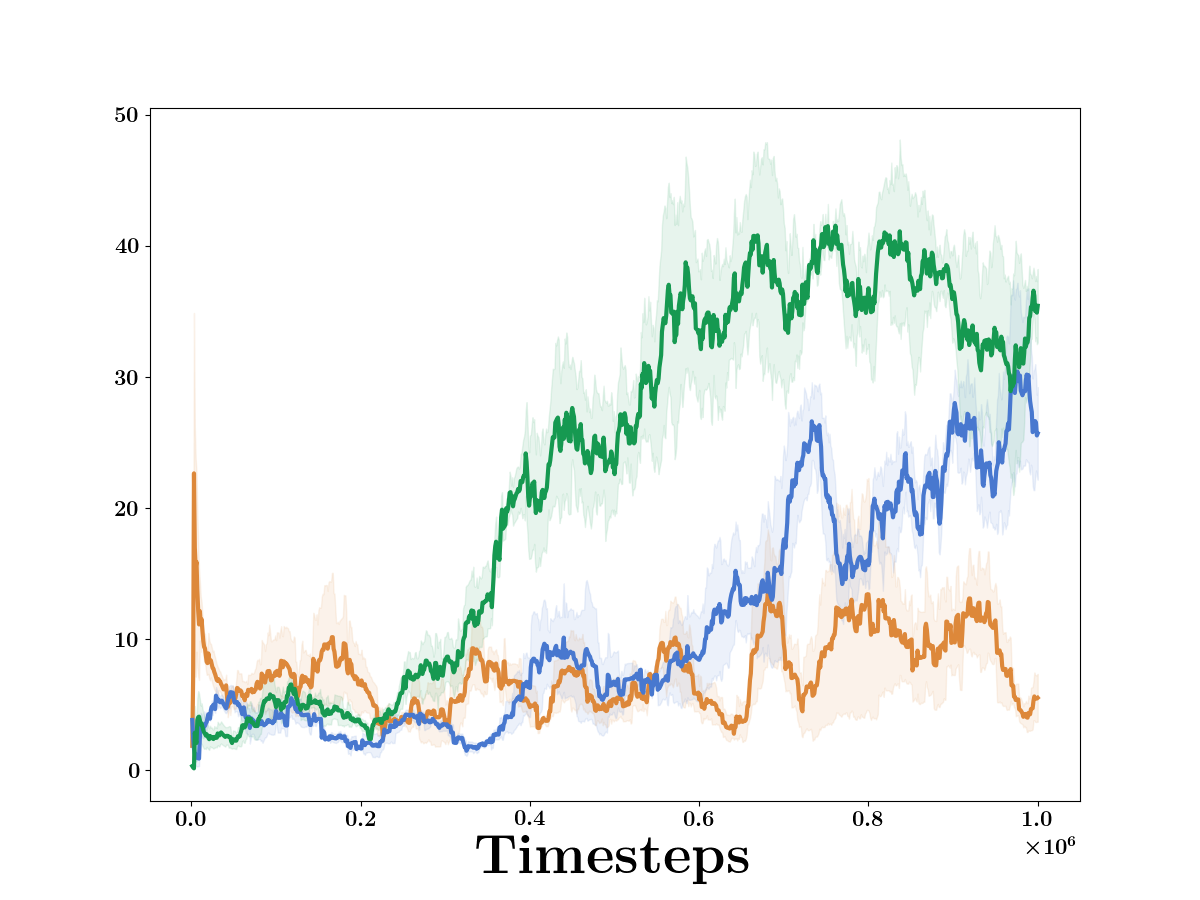}}
    \caption{\small{Additional Experiment on TRPO. We compare No Trust Region with two alternative trust region constructions: KL-divergence and Wassertein distance (ours).}}
    \label{figure:trpoappendix}
\vspace{-3mm}    
\end{figure}

\paragraph{Wasserstein AO vs. Particle Approximation:} To calculate the regularized Wasserstein distance, we propose a gradient descent method that iteratively updates the test function. The alternting optimization (AO) scheme consists of updating both the test function and the distribution parameters such that the regularized Wasserstein distance of the trainable distribution against the reference distribution is minimized. Alternatively, we can also adopt a particle approximation method to calculate the Wasserstein distance and update the distribution parameters using an approximate gradient descent method \citep{zhang}. We see the benefit in clock time in Fig \ref{figure:wall_clock_time}. 

\begin{figure}[H]
    \begin{minipage}{0.99\textwidth}
    \subfigure[\textbf{Pendulum}]{\includegraphics[width=.24\textwidth]{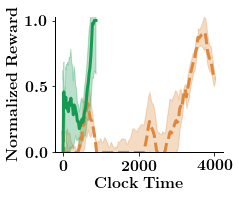}}
    \subfigure[\textbf{Hopper: Stand}]{\includegraphics[width=.24\textwidth]{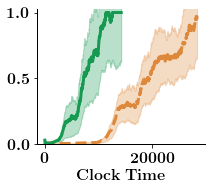}}
    \subfigure[\textbf{Hopper: Hop}]{\includegraphics[width=.24\textwidth]{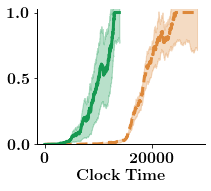}}
    \subfigure[\textbf{Walker: Stand}]{\includegraphics[width=.24\textwidth]{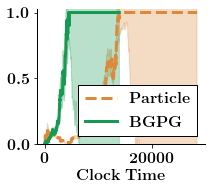}}
    \caption{\small{The clock-time comparison (in sec) of BGPG (alternating optimization) with particle approximation.}}
    \label{figure:wall_clock_time}
    \end{minipage}
\vspace{-3mm}    
\end{figure}

One major advantage of BGPG against particle approximation is its ease of parallelization. In particular, when using the concatenation-of-actions embedding, the aggregate Wasserstein distance can be decomposeed into an average of a set of Wasserstein distances over states. To calculate this aggregated gradient, BGPG can easily leverage the matrix multiplication; on the other hand, particle approximation requires that the dual optimal variables of each subproblem be computed, which is not straightforward to parallelize. 

We test both methods in the context of trust region policy search, in which we explicitly calculate the Wasserstein distance of consecutive policies and enforce the constraints using a line search as in \citep{trpo}. Both methods require the trust region trade-off parameter $\beta \in \{0.1,1,10\}$. We adopt the particle method in \citep{zhang} where for each state there are $M=16$ particles. The gradients are derived based a RKHS where we adaptively adjust the coefficient of the RBF kernel based on the mean distance between particles.
For the AO, we find that it suffices to carry out $T\in \{1,5,10\}$ gradient descents to approximate the regularized Wasserstein distance.

\subsection{BGES}

Here we reproduce a detailed version of Algorithm \ref{Alg:bges}:

\begin{algorithm}[h]
\textbf{Input:} learning rate $\eta$, noise standard deviation $\sigma$, iterations $T$, BEM $\Phi$, $\beta$ \\
\textbf{Initialize:} Initial policy $\pi_0$ parametrized by $\theta_0$, Behavioral Test Functions $\lambda_1, \lambda_2$. Evaluate policy $\pi_0$ to return trajectory $\tau_0$ and subsequently use the BEM to produce an initial $\hat{\mathbb{P}}_{\pi_0}^\Phi$. \; \\
\For{$t= 1, \ldots, T-1$}{
  1. Sample $\epsilon_1, \cdots, \epsilon_n$ independently from $\mathcal{N}(0,I)$. \;\\
  2. Evaluate policies $\{\pi_{t}^k\}_{k=1}^n$ parameterized by $\{\theta_{t} + \sigma \epsilon_k\}_{k=1}^n$ to return rewards $R_k$ and trajectories $\tau_k$ for all $k$. \; \\
  3. Use BEM to map trajectories $\tau_k$ to produce empirical $\hat{\mathbb{P}}_{\pi_t^k}^\Phi$ for all $k$. \; \\
  4. Update $\lambda_1$ and $\lambda_2$ using Algorithm \ref{Alg:wass_constinuous_SGD}, where $\mu = \frac{1}{n}\cup_{k=1}^n \hat{ \mathbb{P}}_{\pi_{t-1}^k}^\Phi$ and $\nu = \frac{1}{n}\cup_{k=1}^n \hat{ \mathbb{P}}_{\pi_t^k}^\Phi $ are the uniform distribution over the set of from 3 for $t-1$ and $t$. \; \\
  5. Approximate $\widehat{\mathrm{WD}}\gamma(\mathbb{P}_{\pi_t^k}^\Phi ,\mathbb{P}^{\Phi}_{\pi_t})$ plugging in $\lambda_1, \lambda_2$ into Eq. \ref{equation::empirical_smooth_wass} for each perturbed policy $\pi_k$ \; \\
  6. Update Policy: $\theta_{t+1} = \theta_{t} + \eta \nabla_{ES} F$, where: \; \\
  \vspace{-4mm}
   \begin{align*}
      \nabla_{ES} F = \frac{1}{\sigma}\sum_{k=1}^n [(1-\beta)(R_k - R_t) + \beta \widehat{\mathrm{WD}}_\gamma(\mathbb{P}_{\pi_t^k}^\Phi ,\mathbb{P}^{\Phi}_{\pi_t})]\epsilon_k
  \end{align*}
  \vspace{-2mm}
 }
 \caption{Behavior-Guided Evolution Strategies with \textbf{On-Policy} Embeddings}
\label{Alg:bges_full}
\end{algorithm}

\paragraph{Efficient Exploration:} To demonstrate the effectiveness of our method in exploring deceptive environments, we constructed two new environments using the MuJoCo simulator. For the point environment, we have a $6$ dimensional state and $2$ dimensional action, with the reward at each timestep calculated as the distance between the agent and the goal. We use a horizon of $50$ which is sufficient to reach the goal. The quadruped environment is based on $\mathrm{Ant}$ from the $\mathrm{Open}$ $\mathrm{AI}$ $\mathrm{Gym}$ \citep{brockman2016openai}, and has a similar reward structure to the point environment but a much larger state space (113) and action space (8). For the quadruped, we use a horizon length of $400$. 

To leverage the trivially parallelizable nature of ES algorithms, we use the $\mathrm{ray}$ library, and distribute the rollouts across $72$ workers using AWS. Since we are sampling from an isotropic Gaussian, we are able to pass only the seed to the workers, as in \cite{ES}. However we do need to return trajectory information to the master worker.

For both the point and quadruped agents, we use random features with dimensionality $m=1000$, and $100$ warm-start updates for the WD at each iteration. For point, we use the final state embedding, learning rate $\eta = 0.1$ and $\sigma$ = $0.01$. For the quadruped, we use the reward-to-go embedding, as we found this was needed to learn locomotion, as well as a learning rate of $\eta = 0.02$ and $\sigma$ = 0.02. The hyper-parameters were the same for all ES algorithms. When computing the WD, we used the previous $2$ policies, $\theta_{t-1}$ and $\theta_{t-2}$. 

Our approach includes several new hyperparameters, such as the kernel for the Behavioral Test Functions and the choice of BEM. For our experiments we did not perform any hyperparameter optimization. We only considered the rbf kernel, and only varied the BEM for BGES. For BGES, we demonstrated several different BEMs, and we show an ablation study for the point agent in Fig. \ref{fig:maxmaxablation} where we see that both the reward-to-go (RTG) and Final State (SF) worked, but the vector of all states (SV) did not (for 5 seeds). We leave learned BEMs as exciting future work.

\begin{figure}[H]
    \begin{minipage}{0.99\textwidth}
	\centering\subfigure[\textbf{Embeddings}]{\includegraphics[width=0.25\textwidth]{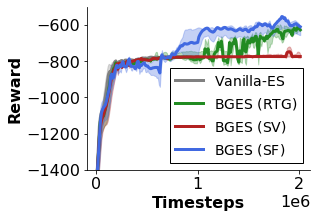}}	
	\centering\subfigure[\textbf{Previous Policies}]{\includegraphics[width=0.25\textwidth]{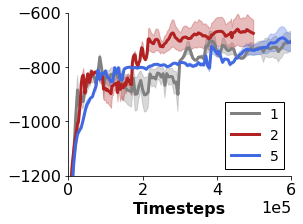}}	
	\end{minipage}
	\caption{A sensitivity analysis investigating a) the impact of the embedding and b) the number of previous policies $\theta_{t-i},  i \in {1, 2, 5}$}
	\label{fig:maxmaxablation} 
\end{figure}

For embeddings, we compare the reward-to-go (RTG), concatenation of states (SV) and final state (SF). In both the RTG and SF case the agent learns to navigate past the wall ($>-800$). For the number of previous policies, we use the SF embedding, and using $2$ appears to work best, but both $1$ and $5$ do learn the correct behavior.

\paragraph{Escaping Local Maxima:} We also demonstrated that our method leads to faster training even in more standard settings, where exploration is not that crucial, but the optimization can be trapped in local maxima. To show it, we compared baseline ES algorithm for ES optimization from \cite{ES} with its enhancements, where regularizers using different metrics on the space of probabilistic distributions corresponding to policy embeddings were used, as in the previous paragraph. We noticed that adding Wasserstein regularizers drastically improved optimization, whereas regularizers based on other distances/divergencies, namely: Hellinger, Jensen-Shannon, KL and TV did not have any impact. We considered $\mathrm{Swimmer}$ task from $\mathrm{OpenAI}$ $\mathrm{Gym}$ and to make it challenging, reduced the number of perturbations per iteration to $80$. In that setting our method was the only one that was not trapped in local maxima and managed to learn effective policies.

\subsection{Imitation Learning:}\label{section::appendix_imitation} For the Imitation Learning experiment we used the reward-to-go embedding, with learning rate $\eta = 0.1$ and $\sigma$ = $0.01$. We use one oracle policy, which achieves $>360$ on the environment. The only information provided to the algorithm is the embedded trajectory, used to compute the WD. This has exciting future applications since no additional information about the oracle is required in order to significantly improve learning.

\subsection{Repulsion Learning and Attraction learning}
\label{Sec:repulsion}

Here we reproduce a full version of Algorithm \ref{Alg:repulsion_learning}:

\begin{algorithm}[H]
\textbf{Input: } $\beta, \eta > 0$, $M\in \mathbb{N}$ \\
\textbf{Initialize:} Initial stochastic policies $\pi_0^\mathbf{a}, \pi_0^\mathbf{b}$, parametrized by $\theta_0^\mathbf{a}, \theta_0^\mathbf{b} $ respectively, Behavioral Test Functions $\lambda_1^\mathbf{a}, \lambda_2^\mathbf{b}$\; \\
\For{$t = 1, \dots , T$ }{
   1. Collect $M$ trajectories $\{ \tau_i^\mathbf{a} \}_{i=1}^M$ from $\mathbb{P}_{\pi_{t-1}^\mathbf{a}}$ and $M$ trajectories $\{ \tau_i^\mathbf{b}  \}_{i=1}^M$ from $\mathbb{P}_{\pi_{t-1}^\mathbf{b}}$. Approximate $\mathbb{P}_{\pi_{t-1}^\mathbf{a}} \bigotimes \mathbb{P}_{\pi_{t-1}^{\mathbf{b}}}$ via $\frac{1}{M} \{ \tau_i^\mathbf{a} \}_{i=1}^M  \bigotimes \frac{1}{M}  \{ \tau_i^\mathbf{b}  \}_{i=1}^M  := \hat{P}_{\pi_{t-1}^\mathbf{a} , \pi_{t-1}^\mathbf{b}}$\\
   2. Form two distinct surrogate rewards for joint trajectories of agents $\mathbf{a}$ and $\mathbf{b}$:\\
  \begin{align*}
  \tilde{R}_\mathbf{a}(\tau_1, \tau_2) &=  \mathcal{R}(\tau_1)  +  \beta \lambda_1^\mathbf{a}(\Phi(  \tau_1  ) )   + 
  \beta \gamma \exp\left( \frac{\lambda_1^\mathbf{a}(\Phi(\tau_1)) - \lambda_2^\mathbf{b}(\Phi(\tau_2)) - C(\Phi(\tau_1)), \Phi(\tau_2))}{\gamma}      \right) \\
  \tilde{R}_\mathbf{b}(\tau_1, \tau_2) &=  \mathcal{R}(\tau_2)    - \beta \lambda_2^\mathbf{b}(\Phi(\tau_2) )+ 
  \beta \gamma \exp\left( \frac{\lambda_1^\mathbf{a}(\Phi(\tau_1)) - \lambda_2^\mathbf{b}(\Phi(\tau_2)) - C(\Phi(\tau_1)), \Phi(\tau_2))}{\gamma}      \right) 
  \end{align*} \;\\
  3. For $\mathbf{c} \in \{\mathbf{a}, \mathbf{b} \}$ use the Reinforce estimator to take gradient steps:
  \begin{align*}
      \theta_t^{\mathbf{c}} = \theta_{t-1}^{\mathbf{c}} + \eta  \mathop{\mathbb{E}}_{\tau^\mathbf{a}, \tau^\mathbf{b} \sim \hat{P}_{\pi_{t-1}^\mathbf{a} , \pi_{t-1}^\mathbf{b}}  }\left[       \tilde{\mathcal{R}}_{\mathbf{c}}(\tau^\mathbf{a}, \tau^{\mathbf{b}})       \left(\sum_{i=0}^{H-1}     \nabla_{\theta_{t-1}^\mathbf{c} }  \log\left(  \pi_{t-1}^{\mathbf{c}}(a_i^\mathbf{c} | s_i^\mathbf{c})    \right)     \right) \right] 
  \end{align*}
  Where $\tau^{\mathbf{a}} =  s_0^{\mathbf{a}}, a_0^{\mathbf{a}}, r_0^{\mathbf{a}}, \cdots, s_H^{\mathbf{a}}, a_H^{\mathbf{a}}, r_H^{\mathbf{a}} $ and $\tau^{\mathbf{b}} =  s_0^{\mathbf{b}}, a_0^\mathbf{b}, r_0^{\mathbf{b}}, \cdots, s_H^\mathbf{b}, a_H^\mathbf{b}, r_H^\mathbf{b} $. \; \\
  5. Use samples from $\hat{P}_{\pi_{t-1}^\mathbf{a} , \pi_{t-1}^\mathbf{b}}$ and Algorithm \ref{Alg:wass_constinuous_SGD} to update the Behavioral Test Functions $\lambda_1^\mathbf{a}, \lambda_2^\mathbf{b}$.
}
 \caption{Behvaior-Guided Repulsion (and Attraction) Learning with \textbf{On-Policy} Embeddings}
\label{Alg:repulsion_learning_full}
\end{algorithm}

Algorithm \ref{Alg:repulsion_learning_full} is the de version of the repulsion algorithm from Section \ref{section::repulsion_learning}. The algorithm maintains two policies $\pi^\mathbf{a}$ and $\pi^{\mathbf{b}}$. Each policy is optimized by taking a policy gradient step (using the Reinforce gradient estimator) in the direction optimizing surrogate rewards $\tilde{\mathcal{R}}_\mathbf{a}$ and $\tilde{\mathcal{R}}_\mathbf{b}$ that combines the signal from the task's reward function $\mathcal{R}$ and the repulsion (or attraction) score encoded by the behavioral test functions $\lambda^{\mathbf{a}}$ and $\lambda^{\mathbf{b}}$. 

We conducted experiments testing Algorithm \ref{Alg:repulsion_learning} on a simple Mujoco environment consisting of a particle that moves on the plane and whose objective is to learn a policy that allows it to reach one of two goals. Each policy outputs a velocity vector and stochasticity is achieved by adding Gaussian noise to the mean velocity encoded by a neural network with two size 5 hidden layers and ReLu activations. If an agent performs action $a$ at state $s$, it moves to state $a+s$. The reward of an agent after performing action $a$ at state $s$  equals $-\| a \|^2*30 - \min(d(s, \text{Goal}_1), d(s, \text{Goal}_2))^2 $ where $d(x,y)$ denotes the distance between $x$ and $y$ in $\mathbb{R}^2$. The initial state is chosen by sampling a Gaussian distribution with mean $\binom{0}{0}$ and diagonal variance $0.1$. In each iteration step we sample $100$ trajectories. In the following pictures we plot the policies' behavior by plotting $100$ trajectories of each. The embedding $\Phi : \Gamma \rightarrow \mathbb{R}$ maps trajectories $\tau$ to their mean displacement in the $x-$axis. We use the squared absolute value difference as the cost function. When $\beta  < 0$ we favour attraction and the agent are encouraged to learn a similar policy to solve the same task.
\begin{figure}[H]
    \begin{minipage}{0.99\textwidth}
	\centering\subfigure[$\pi_0^\mathbf{a}$]{\includegraphics[width=0.20\textwidth]{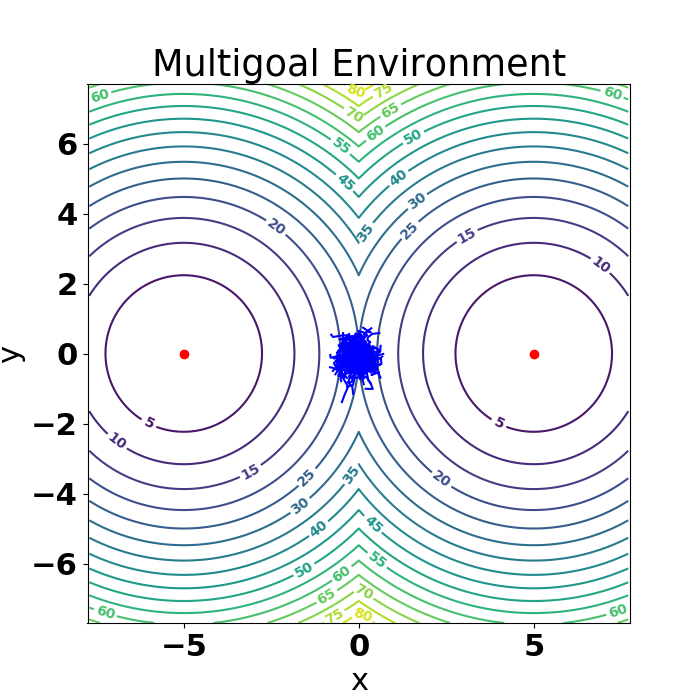}}	
	\centering\subfigure[$\pi_0^\mathbf{b}$]{\includegraphics[width=0.20\textwidth]{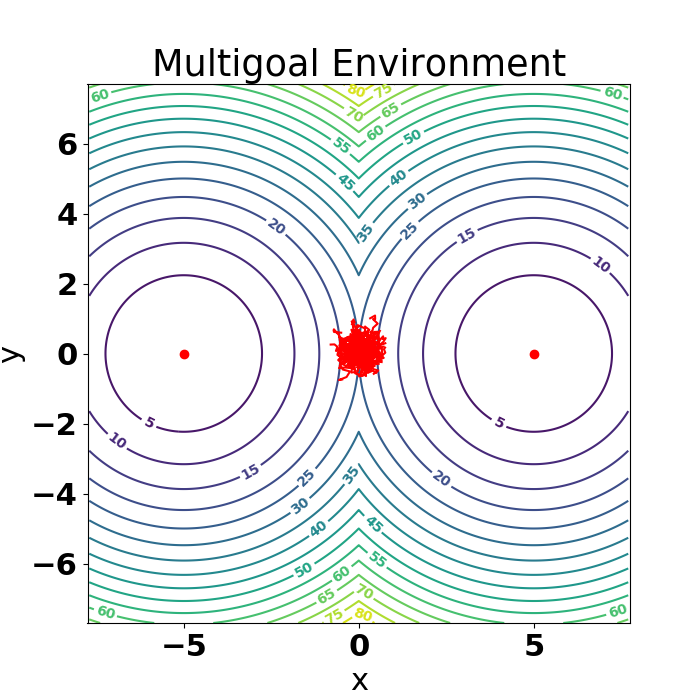}}
	\centering\subfigure[$\lambda^{\mathbf{a}}$ and $-\lambda^{\mathbf{b}}$ at $t=0$]{\includegraphics[width=0.3\textwidth]{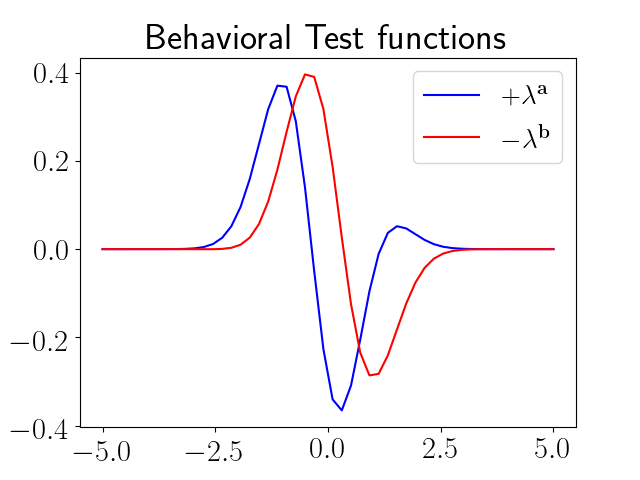}}
	\end{minipage}
	%\vspace{-.5cm}
	\caption{Initial state of policies $\pi^{\mathbf{a}}, \pi^{\mathbf{b}}$ and Behavioral Test functions $\lambda^{\mathbf{a}}, \lambda^{\mathbf{b}}$ in the Multigoal environment.  }
	\label{fig:maxmaxablation} 
\end{figure}
There are two optimal policies, moving the particle to the left goal or moving it to the right goal. We now plot how the policies' behavior and evolves throughout optimization and how the Behavioral Test Functions guide the optimization by favouring the two policies to be close by or far apart. 
\begin{figure}[H]
    \begin{minipage}{0.99\textwidth}
	\centering\subfigure[$\pi_{22}^\mathbf{a}$]{\includegraphics[width=0.20\textwidth]{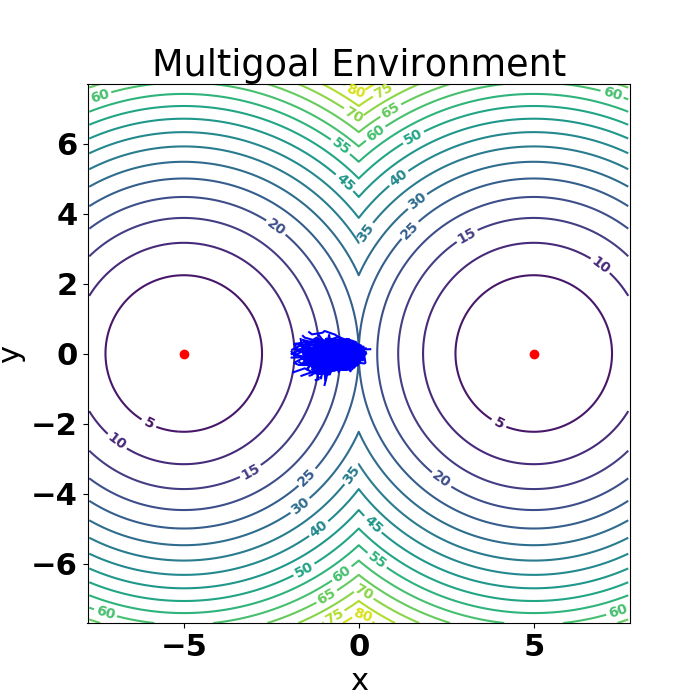}}	
	\centering\subfigure[$\pi_{22}^\mathbf{b}$]{\includegraphics[width=0.20\textwidth]{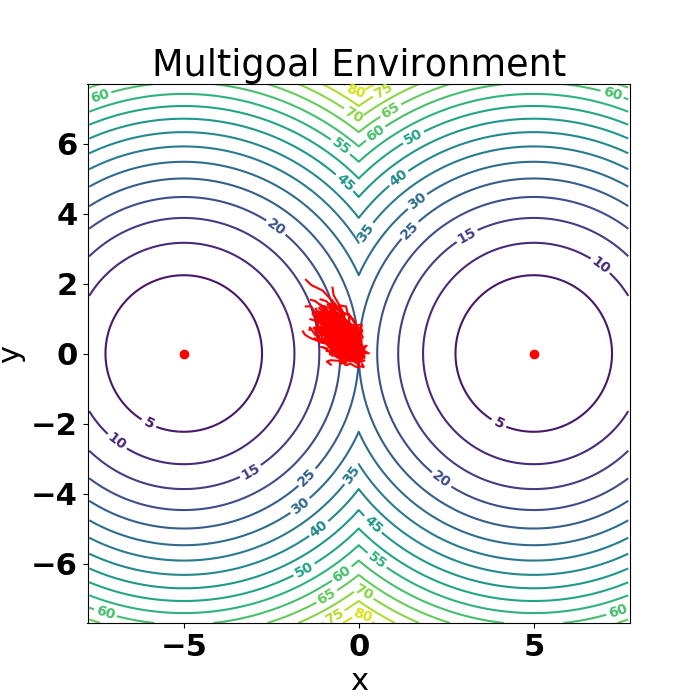}}
	\centering\subfigure[$\lambda^{\mathbf{a}}$ and $-\lambda^{\mathbf{b}}$ at $t=22$]{\includegraphics[width=0.3\textwidth]{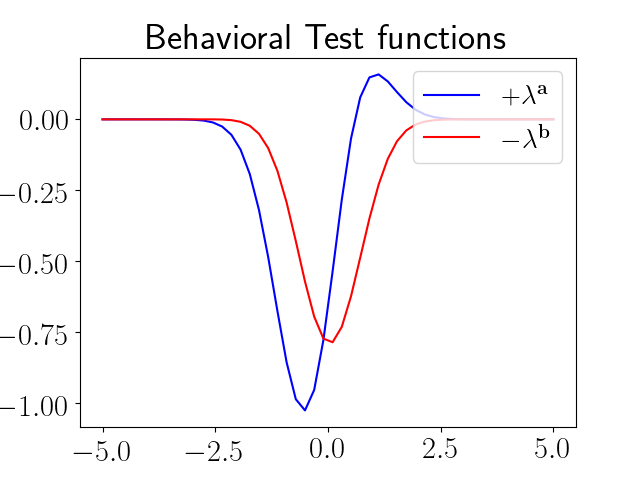}}
	\end{minipage}
	    \begin{minipage}{0.99\textwidth}
	\centering\subfigure[$\pi_{118}^\mathbf{a}$]{\includegraphics[width=0.20\textwidth]{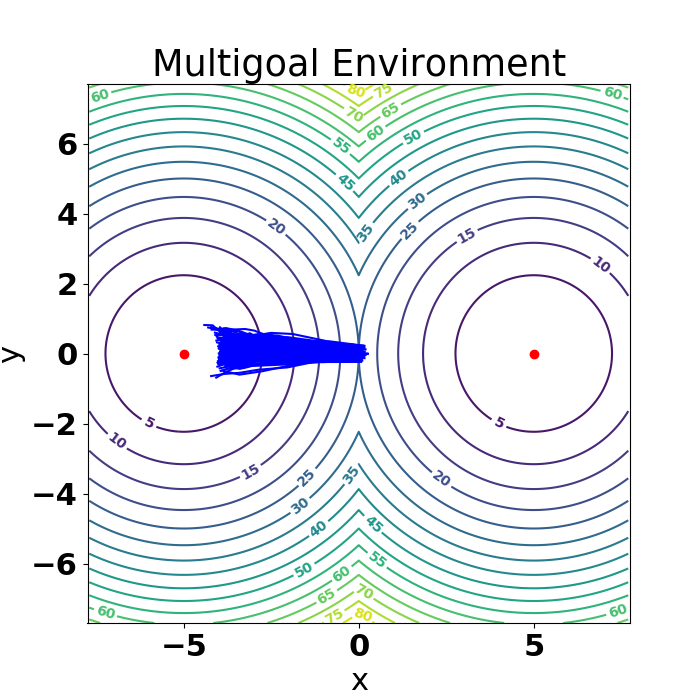}}	
	\centering\subfigure[$\pi_{118}^\mathbf{b}$]{\includegraphics[width=0.20\textwidth]{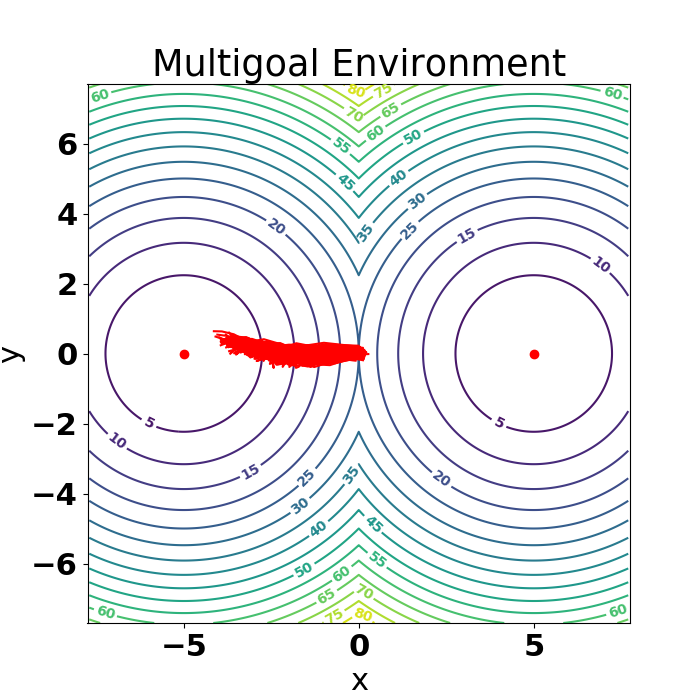}}
	\centering\subfigure[$\lambda^{\mathbf{a}}$ and $-\lambda^{\mathbf{b}}$ at $t=118$]{\includegraphics[width=0.3\textwidth]{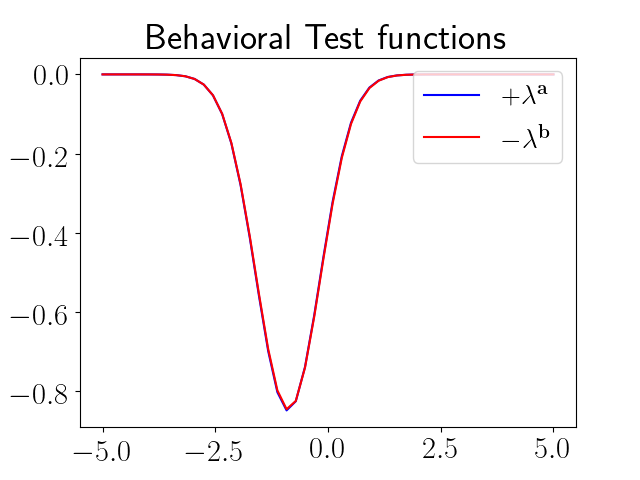}}
	\end{minipage}
	\caption{Evolution of the policies and Behavioral Test Functions throughout optimization.  }
	\label{fig:maxmaxablation} 
\end{figure}

Let $\mathcal{X}$ and $\mathcal{Y}$ be the domains of two measures $\mu$, and $\nu$. Recall that in case $\gamma = 0$, $\mathcal{X} = \mathcal{Y}$, and $C(x,x)= 0$ for all $x \in \mathcal{X}$, then $\lambda^*_\mu(x) = \lambda^*_\nu(x) = \lambda^*(x)$ for all $x \in \mathcal{X}$. In the case of regularized Wasserstein distances with $\gamma >0$, this relationship may not hold true even if the cost satisfies the same diagonal assumption. For example when the regularizing measure is the product measure, and $\mu, \nu$ have disjoint supports, since the soft constraint $\gamma \exp\left( \frac{ \lambda_\mu(\mathbf{x})  - \lambda_\nu(\mathbf{y}) - C(\mathbf{x}, \mathbf{y})}{\gamma}      \right)$ is enforced in expectation over the product measure there may exist optimal solutions $\lambda_\mu^*, \lambda^*_\nu$ that do not satisfy $\lambda^*_\mu = \lambda^*_\nu$.

\section{Theoretical results}
\label{sec:theory}

We start by exploring some properties of the Wasserstein distance and its interaction with some simple classes of embeddings. The first lemma we show has the intention to show conditions under which two policies can be shown to be equal provided the Wasserstein distance between its trajectory embeddings is zero. This result implies that our framework is capable of capturing equality of policies when the embedding space equals the space of trajectories.

\begin{lemma}\label{lemma::policy_equality_simple_appendix}
Let $\mathcal{S}$ and $\mathcal{A}$ be finite sets, the $\mathrm{MDP}$ be episodic (i.e. of finite horizon $H$), and $\Phi(\tau)=\sum_{t=0}^H e_{s_t, a_t}$ with $e_{s,a} \in \mathbb{R}^{|\mathcal{S}| + |\mathcal{A}|}$ the indicator vector for the state action pair $(s,a)$. Let $C(\mathbf{v},\mathbf{w}) = \|\mathbf{v}-\mathbf{w}\|_p^p$ for $p \geq 1$. If $\gamma = 0$ and $\mathrm{WD}_\gamma(\mathbb{P}_{\pi}^\Phi,   \mathbb{P}_{\pi'}^\Phi ) = 0$ then $\pi = \pi'$.
\end{lemma}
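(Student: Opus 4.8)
The plan is to show that $\mathrm{WD}_0(\mathbb{P}_\pi^\Phi, \mathbb{P}_{\pi'}^\Phi) = 0$ forces $\mathbb{P}_\pi^\Phi = \mathbb{P}_{\pi'}^\Phi$ as distributions on $\mathcal{E}$, and then to argue that with this particular $\Phi$ the equality of embedding distributions already pins down the full conditional action distributions at every reachable state, hence $\pi = \pi'$. First I would record that since $C(\mathbf{v},\mathbf{w}) = \|\mathbf{v}-\mathbf{w}\|_p^p$ is a genuine metric-like cost that vanishes exactly on the diagonal (it is zero iff $\mathbf{v} = \mathbf{w}$, and strictly positive otherwise), a coupling $\pi \in \Pi(\mu,\nu)$ achieves zero transport cost if and only if it is supported on the diagonal $\{(\mathbf{v},\mathbf{v})\}$, which is possible precisely when $\mu = \nu$. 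Thus $\mathrm{WD}_0(\mathbb{P}_\pi^\Phi, \mathbb{P}_{\pi'}^\Phi) = 0 \iff \mathbb{P}_\pi^\Phi = \mathbb{P}_{\pi'}^\Phi$. (Here $\mathcal{E} = \mathbb{R}^{|\mathcal{S}|+|\mathcal{A}|}$ but the pushforwards are supported on the finite set of achievable count vectors, so no measure-theoretic subtlety arises.)

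The substantive step is the identifiability argument: from $\mathbb{P}_\pi^\Phi = \mathbb{P}_{\pi'}^\Phi$ deduce $\pi = \pi'$. I would proceed by induction on the horizon index $t$, showing that the two policies induce identical joint distributions over prefixes $(s_0,a_0,\dots,s_t,a_t)$. The key observation is that the embedding $\Phi(\tau) = \sum_{t=0}^H e_{s_t,a_t}$ records, for each state-action pair $(s,a)$, the number of times $(s,a)$ is visited along $\tau$; because the MDP is episodic with fixed horizon $H$ and the transition kernel $\mathrm{P}$ is shared, one can recover moment information about visitation counts. Concretely, the first-coordinate marginals give $\mathbb{E}_{\tau\sim\mathbb{P}_\pi}[\#\{t : (s_t,a_t)=(s,a)\}] = \rho_\pi(s)\pi(a|s)$ summed appropriately; matching these for all $(s,a)$ together with the recursion $\rho_\pi(s') = \sum_{s,a}\rho_\pi(s)\pi(a|s)\mathrm{P}(s'|a,s)$ and a base case $\rho_\pi(s_0)$ fixed by the (shared) initial-state distribution lets one solve layer by layer: at layer $0$ the initial distribution fixes the marginal over $s_0$, equality of expected counts at states reachable at time $0$ fixes $\pi(\cdot|s_0) = \pi'(\cdot|s_0)$ for all such $s_0$, then the shared dynamics propagate the marginal to layer $1$, and so on. Since any state-action pair that is not reached under $\pi$ is also not reached under $\pi'$ (same visitation measure), the policies agree wherever it matters and we may declare $\pi = \pi'$ on the relevant support.

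The main obstacle I anticipate is the identifiability step rather than the Wasserstein step: one must be careful that the expected-count vector (a first moment of the pushforward) genuinely suffices, i.e. that we never need higher moments of the count distribution to disentangle the per-state action probabilities. This works out because the unrolled MDP is a layered (time-indexed) structure, so expected visitation counts at layer $t$ are linear functionals of the layer-$t$ state marginal times the conditional $\pi(\cdot|\cdot)$, and the shared kernel $\mathrm{P}$ makes the layer-to-layer map identical for both policies; hence a clean forward induction closes without needing the full distribution of $\Phi$. A secondary point to handle cleanly is stating exactly in what sense ``$\pi = \pi'$'' holds — namely on the set of states with positive visitation probability — and noting this is the natural and intended notion (action choices at unreachable states are irrelevant). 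I would phrase the conclusion accordingly, or alternatively strengthen the hypothesis to assume full support of the initial distribution so that $\pi = \pi'$ everywhere.
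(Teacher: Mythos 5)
Your proposal is sound in substance but follows a different, and somewhat more roundabout, route than the paper. The paper never upgrades $\mathrm{WD}_0=0$ to full equality of the pushforward distributions: it takes a zero-cost coupling, applies Jensen coordinatewise to conclude that the \emph{expected} visitation counts $\rho_\pi(s,a)=\mathbb{E}_{\tau\sim\mathbb{P}_\pi}\bigl[\#\{t:(s_t,a_t)=(s,a)\}\bigr]$ agree for every pair, sums over actions to get equality of the state visitation frequencies $\rho_\pi(s)$, and then divides (``Bayes rule''), using the time-aggregated identity $\rho_\pi(s,a)=\rho_\pi(s)\,\pi(a|s)$ that holds because the policy is stationary. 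Your first step (zero cost with a cost vanishing exactly on the diagonal forces $\mathbb{P}_\pi^\Phi=\mathbb{P}_{\pi'}^\Phi$) is correct for these finitely supported measures, but it proves more than is needed — first moments suffice. Where you should be careful is the identifiability step: your layer-by-layer induction implicitly treats the embedding as if it recorded per-time-step visits, but $\Phi$ aggregates counts over the whole trajectory, so ``equality of expected counts at states reachable at time $0$'' is not a quantity you can isolate, and the recursion you write omits the initial-distribution term and the horizon truncation. The repair is already in your own toolkit and removes the induction entirely: from matched counts, $\rho_\pi(s)=\sum_a\rho_\pi(s,a)=\rho_{\pi'}(s)$, and then $\pi(a|s)=\rho_\pi(s,a)/\rho_\pi(s)=\pi'(a|s)$ at every $s$ with $\rho_\pi(s)>0$ — which is exactly the paper's argument. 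On the other hand, you are more careful than the paper in flagging that the conclusion $\pi=\pi'$ can only be asserted on states of positive visitation (or under a full-support initial distribution); the paper's proof silently divides by $\rho_\pi(s)$ and does not record this caveat.
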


\begin{proof}
If $\mathrm{WD}_\gamma(\mathbb{P}_{\pi}^\Phi,   \mathbb{P}_{\pi'}^\Phi ) = 0$, there exists a coupling $\Pi$ between $\mathbb{P}_{\pi}^\Phi$ and $\mathbb{P}_{\pi'}^\Phi$ such that:
\begin{equation*}
    \mathbb{E}_{u,v \sim \Pi}\left[ \|  u-v  \|_p^p  \right] = 0
\end{equation*}
Consequently:
\begin{equation*}
    \mathbb{E}_{u,v \sim \Pi}\left[\sum_{(s,a) \in \mathcal{S} \times \mathcal{A} } |  u_{s,a} -v_{s,a}  |^p  \right] =
  \sum_{(s,a) \in \mathcal{S} \times \mathcal{A} } \mathbb{E}_{u,v \sim \Pi}\left[ |  u_{s,a} -v_{s,a}  |^p  \right] = 0
\end{equation*}
Therefore for all $(s,a) \in \mathcal{S} \times \mathcal{A}$:
\begin{equation*}
    \left| \mathbb{E}_{u \sim \mathbb{P}_\pi^\Phi}\left[ u_{s,a}    \right] - \mathbb{E}_{v \sim \mathbb{P}_{\pi'}^\Phi }\left[   v_{s,a} \right] \right|^p \leq \mathbb{E}_{u,v \sim \Pi}\left[  \left| u_{s,a} - v_{s,a}     \right|^p    \right] = 0
\end{equation*}
Where $u_{s,a}$ and $v_{s,a}$ denote the $(s,a)$ entries of $u$ and $v$ respectively. Notice that for all $(s,a) \in \mathcal{S} \times \mathcal{A}$:
\begin{equation}\label{equation::joint_equality}
    \mathbb{P}_{\pi}^\Phi(s,a) = \mathbb{P}_{\pi'}^\Phi(s,a)
\end{equation}
 Since for all $s \in \mathcal{S}$ and $p \geq 1$:
\begin{equation*}
    \left| \sum_{a \in \mathcal{A}} u_{s,a} - v_{s,a}\right|^p \leq   \sum_{a \in \mathcal{A}} |u_{s,a} - v_{s,a}|^p
\end{equation*}
Therefore for all $s \in \mathcal{S}$:
\begin{equation*}
    \left|\mathbb{E}_{u \sim \mathbb{P}_{\pi}^\Phi} \left[ \sum_{a \in \mathcal{A}} u_{s,a} \right] -\mathbb{E}_{v \sim \mathbb{P}_{\pi'}^\Phi} \left[ \sum_{a \in \mathcal{A}} v_{s,a} \right ]\right|^p \leq \mathbb{E}_{u,v \sim \Pi}\left[   \sum_{a \in \mathcal{A}} |u_{s,a} - v_{s,a}|^p   \right] =0
\end{equation*}
Consequently $\mathbb{P}_{\pi}^\Phi(s) = \mathbb{P}_{\pi'}^\Phi(s)$ for all $s \in \mathcal{S}$. By Bayes rule, this plus equation \ref{equation::joint_equality} yields:
\begin{equation*}
   \mathbb{P}_{\pi}^\Phi(a|s) = \mathbb{P}_{\pi'}^\Phi(a|s)
\end{equation*}
And therefore: $\pi = \pi'$.
\end{proof}

These results can be extended in the following ways:

\begin{enumerate}
    \item In the case of a continuous state space, it is possible to define embeddings using Kernel density estimators. Under the appropriate smoothness conditions on the visitation frequencies, picking an adequate bandwidth and using the appropriate norm to compare different embeddings it is possible to derive similar results to those in Lemma \ref{lemma::policy_equality_simple_appendix} for continuous state spaces.
    \item For embeddings such as $\Phi_5$ in Section \ref{subsection::behavioral_embedding} or $\Phi(\tau)=\sum_{t=0}^H e_{s_t, a_t}$, when $\gamma = 0$, if $\mathrm{WD}_\gamma(\mathbb{P}_{\pi}^\Phi,   \mathbb{P}_{\pi'}^\Phi )  \leq \epsilon$ then $|V(\pi) - V(\pi')|\leq \epsilon R$ for $R = \max_{\tau \in \Gamma} \mathcal{R}(\tau)$ thus implying that a small Wasserstein distance between $\pi$ and $\pi'$s PPEs implies a small difference in their value functions.
\end{enumerate}

\subsection{Random features stochastic gradients}
Let $\phi_\kappa$ and $\phi_\ell$ be two feature maps over $\mathcal{X}$ and $\mathcal{Y}$ and corresponding to kernels $\kappa$ and $\ell$ respectively. For this and the following sections we will make use of the following expression:

    \begin{align}
    G( \mathbf{p}^{\mu}, \mathbf{p}^{\nu}) &= \beta \int_{\mathcal{X}} \left( \mathbf{p}^{\mu}  \right)^\top \phi_\kappa(\mathbf{x}) d \mu(\mathbf{x}, \theta) - \beta \int_{\mathcal{Y}} \left(  \mathbf{p}^{\nu}   \right)^\top \phi_\ell( \mathbf{y} ) d \nu(\mathbf{y})   + \label{equation::random_features_objective}  \\
    &\quad \gamma \beta \int_{\mathcal{X} \times \mathcal{Y}} \exp\left(  \frac{ \left(\mathbf{p}^{\mu }   \right)^\top \phi_\kappa(\mathbf{x}) - \left(  \mathbf{p}^{\nu}  \right)^\top \phi_\ell(\mathbf{y}) - C(\mathbf{x}, \mathbf{y})  }{\gamma}      \right) d \mu(\mathbf{x}) d\nu(\mathbf{y}) \notag
\end{align}

We now show how to compute gradients with respect to the random feature maps:

\begin{lemma}\label{lemma::gradient_random_features}
The gradient $\nabla_{\binom{\mathbf{p}^\mu}{\mathbf{p}^\nu}} G(\mathbf{p}^\mu, \mathbf{p}^\nu)$ of the objective function from Equation \ref{equation::random_features_objective} with respect to the parameters $\binom{\mathbf{p}^{\mu}}{\mathbf{p}^{\nu}}$ satisfies:
\begin{equation*}
    \nabla_{\binom{\mathbf{p}^\mu}{\mathbf{p}^\nu}} G(\mathbf{p}^{\mu}, \mathbf{p}^\nu) = \beta \mathbb{E}_{(\mathbf{x}, \mathbf{y}) \sim \mu \bigotimes \nu }\left[ \left( 1-\exp\left(\frac{( \mathbf{p}^\mu )^\top \phi_\kappa(\mathbf{x}) - (\mathbf{p}^{\nu}   )^\top \phi_\ell - C(\mathbf{x}, \mathbf{y}) }{\gamma}      \right)        \right)\binom{\phi_\kappa(\mathbf{x})}{-\phi_\ell(\mathbf{y} )  } \right]
\end{equation*}

\end{lemma}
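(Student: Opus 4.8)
The plan is to differentiate $G$ coordinatewise in the stacked vector $\binom{\mathbf{p}^\mu}{\mathbf{p}^\nu}$, using that $G$ is $\beta$ times the entropic dual functional $\Psi$ of Eq.~\ref{equation::dual_formulation_wass}--\ref{equation::dual_penalty_continuous}: two affine terms plus the damping term $-\beta E_C$ with $E_C = \gamma\iint\exp(h)\,d\mu\,d\nu$, where I take $\xi = \mu\bigotimes\nu$, the choice made for non-discrete domains. First I would dispatch the affine part: $\mathbf{p}^\mu \mapsto \beta\int_{\mathcal{X}}(\mathbf{p}^\mu)^\top\phi_\kappa(\mathbf{x})\,d\mu(\mathbf{x})$ is linear in $\mathbf{p}^\mu$ and independent of $\mathbf{p}^\nu$, so its gradient is $\beta\binom{\int_{\mathcal{X}}\phi_\kappa(\mathbf{x})\,d\mu(\mathbf{x})}{\mathbf{0}}$, and symmetrically $-\beta\int_{\mathcal{Y}}(\mathbf{p}^\nu)^\top\phi_\ell(\mathbf{y})\,d\nu(\mathbf{y})$ has gradient $\beta\binom{\mathbf{0}}{-\int_{\mathcal{Y}}\phi_\ell(\mathbf{y})\,d\nu(\mathbf{y})}$. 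Since $\mu$ and $\nu$ are probability measures I would lift these single integrals to integrals against the product, $\int_{\mathcal{X}}\phi_\kappa\,d\mu = \iint\phi_\kappa(\mathbf{x})\,d\mu(\mathbf{x})\,d\nu(\mathbf{y})$ and likewise for $\phi_\ell$, so the affine contribution to $\nabla G$ is $\beta\,\mathbb{E}_{(\mathbf{x},\mathbf{y})\sim\mu\bigotimes\nu}\bigl[\binom{\phi_\kappa(\mathbf{x})}{-\phi_\ell(\mathbf{y})}\bigr]$.

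Next I would handle the damping term. Writing $h(\mathbf{x},\mathbf{y}) := \gamma^{-1}\bigl((\mathbf{p}^\mu)^\top\phi_\kappa(\mathbf{x}) - (\mathbf{p}^\nu)^\top\phi_\ell(\mathbf{y}) - C(\mathbf{x},\mathbf{y})\bigr)$, we have $-\beta E_C = -\gamma\beta\iint\exp(h)\,d\mu\,d\nu$ and $\nabla_{\binom{\mathbf{p}^\mu}{\mathbf{p}^\nu}}h = \gamma^{-1}\binom{\phi_\kappa(\mathbf{x})}{-\phi_\ell(\mathbf{y})}$. Granting the interchange of $\nabla$ and $\iint$ (justified below), the chain rule gives $\nabla_{\binom{\mathbf{p}^\mu}{\mathbf{p}^\nu}}(-\gamma\beta\iint\exp(h)\,d\mu\,d\nu) = -\gamma\beta\iint\exp(h(\mathbf{x},\mathbf{y}))\,\gamma^{-1}\binom{\phi_\kappa(\mathbf{x})}{-\phi_\ell(\mathbf{y})}\,d\mu(\mathbf{x})\,d\nu(\mathbf{y})$; the explicit $\gamma$ cancels $\gamma^{-1}$, leaving $-\beta\,\mathbb{E}_{\mu\bigotimes\nu}\bigl[\exp(h)\binom{\phi_\kappa(\mathbf{x})}{-\phi_\ell(\mathbf{y})}\bigr]$. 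Adding this to the affine contribution and factoring the common vector $\binom{\phi_\kappa(\mathbf{x})}{-\phi_\ell(\mathbf{y})}$ yields exactly $\beta\,\mathbb{E}_{(\mathbf{x},\mathbf{y})\sim\mu\bigotimes\nu}\bigl[(1-\exp(h(\mathbf{x},\mathbf{y})))\binom{\phi_\kappa(\mathbf{x})}{-\phi_\ell(\mathbf{y})}\bigr]$, which is the claim after unfolding $\exp(h(\mathbf{x},\mathbf{y})) = \exp\bigl(\gamma^{-1}((\mathbf{p}^\mu)^\top\phi_\kappa(\mathbf{x}) - (\mathbf{p}^\nu)^\top\phi_\ell(\mathbf{y}) - C(\mathbf{x},\mathbf{y}))\bigr)$.

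The only non-routine point, and the one I expect to be the main obstacle, is justifying differentiation under the integral sign in the damping term; I would invoke the dominated-convergence (Leibniz) rule. It suffices to exhibit, locally uniformly in $(\mathbf{p}^\mu,\mathbf{p}^\nu)$, a $\mu\bigotimes\nu$-integrable envelope for $\exp(h(\mathbf{x},\mathbf{y}))\,\|\phi_\kappa(\mathbf{x})\|$ and $\exp(h(\mathbf{x},\mathbf{y}))\,\|\phi_\ell(\mathbf{y})\|$. For the random Fourier features used here each coordinate of $\phi_\kappa,\phi_\ell$ is $m^{-1/2}\cos(\cdot)$, so $\|\phi_\kappa\|,\|\phi_\ell\|\le 1$ and $h(\mathbf{x},\mathbf{y})\le \gamma^{-1}(\|\mathbf{p}^\mu\| + \|\mathbf{p}^\nu\| - C(\mathbf{x},\mathbf{y}))$; if the ground cost satisfies $C\ge 0$ the integrand is then bounded by a constant depending continuously on the parameters, so a constant envelope works (more generally it suffices that $\iint\exp(-C(\mathbf{x},\mathbf{y})/\gamma)\,d\mu(\mathbf{x})\,d\nu(\mathbf{y})<\infty$). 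Everything else is the chain rule and the cancellation of $\gamma$, so once this integrability bookkeeping is in place the identity follows immediately.
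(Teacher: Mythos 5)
Your computation is correct and follows essentially the same route as the paper's one-line proof (chain rule plus differentiating under the integral sign), with the added value that you make explicit the dominated-convergence envelope--using $\|\phi_\kappa\|,\|\phi_\ell\|\le 1$ and integrability of $\exp(-C/\gamma)$--that the paper leaves implicit. One remark: you read the damping term with a negative sign, i.e.\ $G=\beta\Psi$ with $\Psi$ as in Eq.~\ref{equation::dual_formulation_wass}, which is indeed the sign required for the stated formula and for the update rule in Eq.~\ref{equation::SGD_random_features}; as literally printed, Eq.~\ref{equation::random_features_objective} carries $+\gamma\beta$ on the exponential term and would yield $1+\exp(\cdot)$ instead, so this is a sign typo in the paper rather than a gap in your argument.
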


\begin{proof}

A simple use of the chain rule, taking the gradients inside the expectation, and the fact that $\mathbf{p}^\mu$ and $\mathbf{p}^\nu$ are vectors yields the desired result. 
\end{proof}

The main consequence of this formulation is the stochastic gradients we use in Algorithm \ref{Alg:wass_constinuous_SGD}.

\subsection{Behavior Guided Policy Gradient and Wasserstein trust region}\label{section::appendix_wasserstein_trust_region}

For a policy $\pi$, we denote as: $V^\pi$, $Q^\pi$ and $A^\pi(s,a) = Q^\pi(s,a) - V^\pi(s)$ the: value function, $Q$-function and advantage function.

The chief goal of this section is to prove Theorem \ref{theorem::policy_improvement_wasserstein_0}. We restate the section's definitions here for the reader's convenience:
To ease the discussion we make the following assumptions:
\begin{itemize}
    \item Finite horizon $T$.
    \item Undiscounted MDP.
    \item States are time indexed. In other words, states visited at time $t$ can't be visited at any other time. 
    \item $\mathcal{S}$ and $\mathcal{A}$ are finite sets. 
\end{itemize}
The third assumption is solely to avoid having to define a time indexed Value function. It can be completely avoided. We chose not to do this in the spirit of notational simplicity. These assumptions can be relaxed, most notably we can show similar results for the discounted and infinite horizon case. We chose to present the finite horizon proof because of the nature of our experimental results. 

Let $\Phi = \mathrm{id}$ be the identity embedding so that $\mathcal{E} = \Gamma$. In this case $\mathbb{P}_{\pi}^\Phi$ denotes the distribution of trajectories corresponding to policy $\pi$. We define the value function $V^\pi : \mathcal{S} \rightarrow \mathbb{R}$ as 
\begin{equation*}
    V^\pi(s_t = s) = \mathbb{E}_{\tau \sim \mathbb{P}_{\pi}^{\mathrm{id}}}\left[ \sum_{\ell = t}^T R(s_{\ell+1}, a_\ell, s_\ell)  | s_t = s \right]
\end{equation*}
The Q-function $Q^\pi: \mathcal{S} \times \mathcal{A} \rightarrow \mathbb{R}$ as:
\begin{equation*}
    Q^\pi(s_t, a_t = a) = \mathbb{E}_{\tau \sim \mathbb{P}_{\pi}^{\mathrm{id}}}\left[    \sum_{\ell = t}^T R(s_{\ell+1}, a_\ell, s_\ell) \right]
\end{equation*}
 Similarly, the advantage function is defined as:
 \begin{equation*}
     A^\pi(s,a) = Q^\pi(s,a) - V^\pi(s)
 \end{equation*}
 We denote by $V(\pi) = \mathbb{E}_{ \tau \sim \mathbb{P}_{\pi}^{\mathrm{id}}}\left[   \sum_{t=0}^T R(s_{t+1}, a_t, s_t)   \right]$ the expected reward of policy $\pi$ and define the visitation frequency as: 
 \begin{equation*}
 \rho_{\pi}(s) = \mathbb{E}_{\tau \sim \mathbb{P}_{\pi}^{\mathrm{id}} } \left[    \sum_{t=0}^T \mathbf{1}(s_t = s)     \right]
\end{equation*}
The first observation in this section is the following lemma:
\begin{lemma}
two distinct policies $\pi$ and $\tilde{\pi}$ can be related via the following equation :
\begin{equation*}
    V(\tilde{\pi}) = V(\pi)  + \sum_{s \in \mathcal{S}}\left( \rho_{\tilde{\pi}}(s) \left( \sum_{ a \in \mathcal{A}} \tilde{\pi}(a | s) A^\pi(s,a)  \right) \right)
\end{equation*}
\end{lemma}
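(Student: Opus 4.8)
The plan is to prove this identity --- the classical performance-difference lemma of \cite{kakade2002approximately} --- by rewriting the right-hand side as an expectation over trajectories generated by $\tilde{\pi}$ and then telescoping. First I would use the time-indexed-states assumption: each state $s$ is visited by a trajectory at most once, at a single time index $t(s)$, so that $\rho_{\tilde{\pi}}(s) = \mathbb{P}_{\tau \sim \mathbb{P}_{\tilde{\pi}}^{\mathrm{id}}}(s_{t(s)} = s)$. This lets me convert the state-weighted double sum into a trajectory expectation:
\[
\sum_{s \in \mathcal{S}} \rho_{\tilde{\pi}}(s) \left( \sum_{a \in \mathcal{A}} \tilde{\pi}(a \mid s) A^\pi(s,a) \right) = \mathbb{E}_{\tau \sim \mathbb{P}_{\tilde{\pi}}^{\mathrm{id}}}\left[ \sum_{t=0}^T A^\pi(s_t, a_t) \right].
\]

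Next I would expand the advantage via $A^\pi(s,a) = Q^\pi(s,a) - V^\pi(s)$ together with the one-step relation $Q^\pi(s_t, a_t) = \mathbb{E}\left[ R(s_{t+1}, a_t, s_t) + V^\pi(s_{t+1}) \mid s_t, a_t \right]$, which follows immediately from the definitions of $Q^\pi$ and $V^\pi$ given above (adopting the terminal convention $V^\pi(s_{T+1}) = 0$, so that the relation also holds at $t = T$). Substituting into the sum and using the tower property over $\mathbb{P}_{\tilde{\pi}}^{\mathrm{id}}$, the contribution $\sum_{t=0}^T \bigl( V^\pi(s_{t+1}) - V^\pi(s_t) \bigr)$ telescopes to $V^\pi(s_{T+1}) - V^\pi(s_0) = -V^\pi(s_0)$. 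Collecting the remaining terms gives
\[
\mathbb{E}_{\tau \sim \mathbb{P}_{\tilde{\pi}}^{\mathrm{id}}}\!\left[ \sum_{t=0}^T A^\pi(s_t,a_t) \right] = \mathbb{E}_{\tau \sim \mathbb{P}_{\tilde{\pi}}^{\mathrm{id}}}\!\left[ \sum_{t=0}^T R(s_{t+1}, a_t, s_t) \right] - \mathbb{E}_{\tau \sim \mathbb{P}_{\tilde{\pi}}^{\mathrm{id}}}\!\left[ V^\pi(s_0) \right] = V(\tilde{\pi}) - V(\pi),
\]
where the last step uses that the first expectation is by definition $V(\tilde{\pi})$, and that $s_0$ is drawn from the same initial-state distribution under $\pi$ and $\tilde{\pi}$, so $\mathbb{E}[V^\pi(s_0)] = V(\pi)$. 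Rearranging yields the claim.

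The main obstacle I anticipate is bookkeeping rather than conceptual: carefully justifying the interchange of the state-weighted sum with the trajectory expectation (this is precisely where the time-indexed-states assumption is invoked, and without it one needs the discounted occupancy measure instead) and tracking the finite-horizon boundary term $V^\pi(s_{T+1}) = 0$ consistently through the telescoping. The rest is linearity of expectation and the definitions already supplied.
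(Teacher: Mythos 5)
Your proposal is correct and follows essentially the same argument as the paper: expand the advantage via the one-step relation $A^\pi(s,a) = \mathbb{E}_{s'}\left[ R(s',a,s) + V^\pi(s') - V^\pi(s) \right]$, telescope the value terms over the trajectory under $\tilde{\pi}$, and use the shared initial-state distribution to identify $\mathbb{E}\left[ V^\pi(s_0) \right] = V(\pi)$. The only difference is that you make explicit the conversion of the $\rho_{\tilde{\pi}}$-weighted state sum into a trajectory expectation, a step the paper leaves implicit.
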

\begin{proof}
Notice that $A^\pi(s,a) = \mathbb{E}_{s' \sim P(s'| a,s) } \left[  R(s', a, s) + V^\pi(s') - V^\pi(s)     \right]$. Therefore:
\begin{align*}
    \mathbb{E}_{\tau \sim \mathbb{P}_{\tilde{\pi}}^{\mathrm{id}}}\left[ \sum_{t=0}^T A_\pi(s_t, a_t)      \right] &= \mathbb{E}_{\tau \sim \mathbb{P}_{\tilde{\pi}}^{\mathrm{id}}}\left[ \sum_{t=0}^T   R(s_{t+1}, a_t, s_t) + V^\pi(s_{t+1}) - V^\pi(s_t)    \right] \\
    &= \mathbb{E}_{\tau \sim \mathbb{P}_{\tilde{\pi}}^{\mathrm{id}}}\left[ \sum_{t=0}^T   R(s_{t+1}, a_t, s_t)\right]  - \mathbb{E}_{s_0}\left[  V^\pi(s_0)\right]     \\
    &= -V(\pi) + V(\tilde{\pi})
\end{align*}
The result follows.
\end{proof}
See \cite{sutton1998introduction} for an alternative proof. We also consider the following linear approximation to $V$ around policy $\pi$ (see: \cite{kakade2002approximately}):
\begin{equation*}
    L(\tilde{\pi}) = V(\pi) +   \sum_{s\in \mathcal{S}}\left( \rho_{\pi}(s) \left( \sum_{a \in \mathcal{A}} \tilde{\pi}(a | s) A^\pi(s,a)  \right) \right)
\end{equation*}

Where the only difference is that $\rho_{\tilde{\pi}}$ was substituted by $\rho_{\pi}$. Consider the following embedding $\Phi^s : \Gamma \rightarrow \mathbb{R}^{|\mathcal{S}|}$ defined by $\left( \Phi(\tau ) \right)_s=    \sum_{t=0}^T \mathbf{1}(s_t = s)$, and related cost function defined as: $C(\mathbf{v},\mathbf{w}) = \|\mathbf{v}-\mathbf{w}\|_1$. 

\begin{lemma}
The Wasserstein distance $\mathrm{WD}_0( \mathbb{P}_{\tilde{\pi}}^{\Phi^s}, \mathbb{P}_{\pi}^{\Phi^s})$  is related to visit frequencies since:  
\begin{equation*}\mathrm{WD}_0(  \mathbb{P}_{\tilde{\pi}}^{\Phi^s}, \mathbb{P}_{\pi}^{\Phi^s})\geq  \sum_{s \in \mathcal{S}} |  \rho_\pi(s) - \rho_{\tilde{\pi}}(s) | 
\end{equation*}
\end{lemma}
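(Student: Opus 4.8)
The plan is to work directly from the primal (coupling) definition of $\mathrm{WD}_0$ and push the lower bound through a coordinatewise triangle inequality, exactly mirroring the argument used in the proof of Lemma~\ref{lemma::policy_equality_simple_appendix}. First I would recall that since $\gamma = 0$, Equation~\ref{equation::definition_cont_wass} reads $\mathrm{WD}_0(\mathbb{P}_{\tilde\pi}^{\Phi^s}, \mathbb{P}_{\pi}^{\Phi^s}) = \min_{\Pi \in \Pi(\mathbb{P}_{\tilde\pi}^{\Phi^s}, \mathbb{P}_{\pi}^{\Phi^s})} \mathbb{E}_{(u,v)\sim\Pi}\big[\|u-v\|_1\big]$, where the coupling $\Pi$ has marginals $\mathbb{P}_{\tilde\pi}^{\Phi^s}$ and $\mathbb{P}_{\pi}^{\Phi^s}$ on $\mathbb{R}^{|\mathcal{S}|} \times \mathbb{R}^{|\mathcal{S}|}$. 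It therefore suffices to show that \emph{every} such coupling has transport cost at least $\sum_{s} |\rho_\pi(s) - \rho_{\tilde\pi}(s)|$.

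The key chain of steps for a fixed coupling $\Pi$ is: (i) expand $\|u-v\|_1 = \sum_{s\in\mathcal{S}} |u_s - v_s|$ and use linearity of expectation (the sum is finite since $\mathcal{S}$ is finite) to get $\mathbb{E}_{(u,v)\sim\Pi}[\|u-v\|_1] = \sum_{s} \mathbb{E}_{(u,v)\sim\Pi}[|u_s - v_s|]$; (ii) apply Jensen's inequality (equivalently $|\mathbb{E}[X]| \le \mathbb{E}[|X|]$) coordinatewise to obtain $\mathbb{E}_{(u,v)\sim\Pi}[|u_s - v_s|] \ge \big|\mathbb{E}_{(u,v)\sim\Pi}[u_s] - \mathbb{E}_{(u,v)\sim\Pi}[v_s]\big|$; (iii) invoke the marginal constraint, so that $\mathbb{E}_{(u,v)\sim\Pi}[u_s] = \mathbb{E}_{u\sim\mathbb{P}_{\tilde\pi}^{\Phi^s}}[u_s]$ and similarly for $v_s$; and (iv) identify these marginal expectations with visitation frequencies via the definition of $\Phi^s$: $\mathbb{E}_{u\sim\mathbb{P}_{\pi}^{\Phi^s}}[u_s] = \mathbb{E}_{\tau\sim\mathbb{P}_\pi}\big[\sum_{t=0}^T \mathbf{1}(s_t=s)\big] = \rho_\pi(s)$, and likewise $\mathbb{E}_{u\sim\mathbb{P}_{\tilde\pi}^{\Phi^s}}[u_s] = \rho_{\tilde\pi}(s)$. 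Chaining (i)--(iv) yields $\mathbb{E}_{(u,v)\sim\Pi}[\|u-v\|_1] \ge \sum_{s} |\rho_\pi(s) - \rho_{\tilde\pi}(s)|$; taking the minimum over $\Pi$ completes the proof.

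There is essentially no hard part here; the only things to be mildly careful about are that $\mathcal{S}$ is finite (so the interchange of sum and expectation is unproblematic and all expectations are finite, the embedding being bounded by $T+1$ in each coordinate), and that with $\gamma=0$ the Wasserstein quantity is genuinely the transport cost without an entropic penalty, so the inequality we prove for couplings transfers to $\mathrm{WD}_0$. One could also remark that equality need not hold in general — the bound is the ``projection'' of the optimal transport problem onto coordinatewise means — but this is not needed for the statement. If anything the subtlety is purely expository: flagging that $\rho_\pi$ here coincides with the pushforward mean $\mathbb{E}[u_s]$ under $\mathbb{P}_\pi^{\Phi^s}$, which is immediate from unwinding the definition of $\Phi^s$.
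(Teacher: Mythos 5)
Your proof is correct and follows essentially the same route as the paper's: decompose the $\ell_1$ transport cost coordinatewise, apply $|\mathbb{E}[u_s]-\mathbb{E}[v_s]|\le \mathbb{E}[|u_s-v_s|]$ under the coupling, and identify the marginal means with the visitation frequencies $\rho_\pi(s)$, $\rho_{\tilde\pi}(s)$. The only (harmless) difference is that you bound the cost of every coupling and then minimize, whereas the paper argues directly with the optimal coupling.
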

\begin{proof}
Let $\Pi$ be the optimal coupling between $\mathbb{P}_{\tilde{\pi}}^{\Phi^s}$ and $\mathbb{P}_{\pi}^{\Phi^s}$. Then:

\begin{align*}
    \mathrm{WD}_0( \mathbb{P}_{\tilde{\pi}}^{\Phi^s}, \mathbb{P}_{\pi}^{\Phi^s}) &= \mathbb{E}_{u,v \sim \Pi}\left[  \| u - v\|_1 \right]\\
    &= \sum_{s\in\mathcal{S}} \mathbb{E}_{u,v \sim \Pi}\left[  |u_s - v_s|\right]
 \end{align*}
Where $u_s$ and $v_s$ denote the $s \in \mathcal{S}$ indexed entry of the $u$ and $v$ vectors respectively.
Notice that for all $s \in \mathcal{S}$ the following is true:
\begin{equation*}
    \left| \underbrace{ \mathbb{E}_{u \sim  \mathbb{P}_{\pi}^{\Phi^s}}\left[ u_s\right] }_{\rho_\pi(s)} -  \underbrace{\mathbb{E}_{v \sim  \mathbb{P}_{\pi}^{\Phi^s}}\left[v_s   \right]}_{\rho_{\pi'}(s)}      \right| \leq  \mathbb{E}_{u,v \sim \Pi}\left[  |u_s - v_s|\right]
\end{equation*}
The result follows.

\end{proof}

 These observations enable us to prove an analogue of Theorem 1 from \cite{trpo}, namely:

\begin{theorem}\label{theorem::policy_improvement_wasserstein}
If $ \mathrm{WD}_0(  \mathbb{P}_{\tilde{\pi}}^{\Phi^s}, \mathbb{P}_{\pi}^{\Phi^s}) \leq \delta$ and $\epsilon  = \max_{s,a} | A^\pi(s,a)  |$, then $V(\tilde{\pi}) \geq L(\tilde{\theta}) - \delta \epsilon$.  %\textcolor{red}{$\lambda$ not defined}.
\end{theorem}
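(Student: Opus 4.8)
The plan is to chain the two lemmas proven immediately above this theorem together with one elementary estimate, mimicking the argument behind Theorem~1 of \cite{trpo}. First I would subtract the exact relation for $V(\tilde{\pi})$ from the definition of the linearized surrogate $L(\tilde{\pi})$. Using
\[
V(\tilde{\pi}) = V(\pi) + \sum_{s \in \mathcal{S}} \rho_{\tilde{\pi}}(s) \Bigl( \sum_{a \in \mathcal{A}} \tilde{\pi}(a\mid s) A^\pi(s,a) \Bigr), \qquad L(\tilde{\pi}) = V(\pi) + \sum_{s \in \mathcal{S}} \rho_{\pi}(s) \Bigl( \sum_{a \in \mathcal{A}} \tilde{\pi}(a\mid s) A^\pi(s,a) \Bigr),
\]
I obtain
\[
V(\tilde{\pi}) - L(\tilde{\pi}) = \sum_{s \in \mathcal{S}} \bigl( \rho_{\tilde{\pi}}(s) - \rho_{\pi}(s) \bigr) \Bigl( \sum_{a \in \mathcal{A}} \tilde{\pi}(a\mid s) A^\pi(s,a) \Bigr).
\]

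Next I would bound the inner term: since $\tilde{\pi}(\cdot \mid s)$ is a probability distribution over $\mathcal{A}$, Jensen's inequality gives $\bigl| \sum_{a} \tilde{\pi}(a\mid s) A^\pi(s,a) \bigr| \le \max_{s,a} |A^\pi(s,a)| = \epsilon$. Substituting and applying the triangle inequality over $\mathcal{S}$ yields $\bigl| V(\tilde{\pi}) - L(\tilde{\pi}) \bigr| \le \epsilon \sum_{s \in \mathcal{S}} |\rho_{\tilde{\pi}}(s) - \rho_{\pi}(s)|$. Finally I would invoke the lemma relating $\mathrm{WD}_0$ to visitation frequencies, which states $\mathrm{WD}_0(\mathbb{P}_{\tilde{\pi}}^{\Phi^s}, \mathbb{P}_{\pi}^{\Phi^s}) \ge \sum_{s} |\rho_{\pi}(s) - \rho_{\tilde{\pi}}(s)|$; together with the hypothesis $\mathrm{WD}_0(\mathbb{P}_{\tilde{\pi}}^{\Phi^s}, \mathbb{P}_{\pi}^{\Phi^s}) \le \delta$ this bounds the total-variation-like quantity by $\delta$. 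Combining the two inequalities gives $V(\tilde{\pi}) - L(\tilde{\pi}) \ge -\delta\epsilon$, which is exactly the claim.

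I do not expect a genuine obstacle here, as the theorem is essentially a corollary of the two preceding lemmas. The only subtlety worth flagging is that the embeddings $\Phi^s$ push forward to distributions on $\mathbb{R}^{|\mathcal{S}|}$ whose coordinate sums equal $T+1$ rather than $1$, so one cannot directly reach for a standard total-variation bound; this is precisely why the intermediate lemma is stated in terms of the (unnormalized) visitation measures $\rho$, and its proof already absorbs this point via Jensen's inequality applied to the optimal coupling. A secondary, purely cosmetic point is that the theorem only needs the one-sided bound $V(\tilde{\pi}) \ge L(\tilde{\pi}) - \delta\epsilon$, so the absolute values in the derivation above are slightly stronger than required but harmless.
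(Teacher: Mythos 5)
Your proposal is correct and follows essentially the same route as the paper: the same decomposition $V(\tilde{\pi}) - L(\tilde{\pi}) = \sum_{s}(\rho_{\tilde{\pi}}(s)-\rho_{\pi}(s))\sum_{a}\tilde{\pi}(a\mid s)A^{\pi}(s,a)$, the same $\ell_1$/$\ell_\infty$ pairing (which the paper packages as a single H\"older inequality where you split it into a convexity bound plus the triangle inequality), and the same invocation of the lemma bounding $\sum_s|\rho_\pi(s)-\rho_{\tilde\pi}(s)|$ by $\mathrm{WD}_0(\mathbb{P}_{\tilde{\pi}}^{\Phi^s},\mathbb{P}_{\pi}^{\Phi^s})\le\delta$. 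No gaps.
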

As in \cite{trpo}, Theorem \ref{theorem::policy_improvement_wasserstein_0} implies a policy improvement guarantee for BGPG from Section \ref{sec:bgpg}.

\begin{proof}
Notice that:
\begin{align*}
    V(\tilde{\pi}) - L (\tilde{\pi}) =   \sum_{s \in \mathcal{S} } \left( \left( \rho_{\tilde{\pi}}(s) - \rho_{\pi}(s) \right) \left( \sum_{a \in \mathcal{A}} \tilde{\pi}(a | s) A^\pi(s,a)   \right) \right)
\end{align*}
Therefore by Holder inequality:
\begin{equation*}
   | V(\tilde{\pi}) - L(\tilde{\pi}) | \leq \underbrace{ \left(  \sum_{s \in \mathcal{S}} |  \rho_\pi(s) - \rho_{\tilde{\pi}}(s) |  \right) }_{  \leq \mathrm{WD}_0(  \mathbb{P}_{\tilde{\pi}}^{\Phi^s}, \mathbb{P}_{\pi}^{\Phi^s}) \leq \delta } \underbrace{ \left( \sup_{s \in \mathcal{S}}  \left|  \sum_{a \in \mathcal{A}} \tilde{\pi}(a | s) A^\pi(s,a)      \right| \right) }_{\leq \epsilon }
\end{equation*}
The result follows.
\end{proof}
We can leverage the results of Theorem \ref{theorem::policy_improvement_wasserstein} to show wasserstein trust regions methods with embedding $\Phi^s$ give a monotonically improving sequence of policies. The proof can be concluded by following the logic of Section 3 in \cite{trpo}.

\subsection{Off policy embeddings and their properties.}

It is easy to see that if the cost function equals the $l_2$ norm between state-policy pairs and if $\mathrm{WD}_0( \mathbb{P}_\pi^{\Phi_\mathcal{S}}, \mathbb{P}_{\pi'}^{\Phi_\mathcal{S}} ) = 0$ then $\mathbb{E}_{\mathbb{P}_\mathcal{S}}\left[ \mathrm{1}(\pi(S) \neq \pi'(S))\right] = 0$. If $\mathbb{P}_\mathcal{S}$ has mass only in relevant areas of the state space, a value of zero implies the two policies behave similarly where it matters. In the case when the user may care only about the action of a policy within a set of states of interest, this notion applies. 

When the sampling distribution can be identified with the stationary distribution over states of the current policy, we can recover trust region-type of results for BGPG.

%%%%%%%%%%%%%%%%%%%%%%%%%%%%%%%%%%%%%%%%%%%%%%%%%%%%%%%%%%%%%%%%%%%%%%%%%%%%%%%
%%%%%%%%%%%%%%%%%%%%%%%%%%%%%%%%%%%%%%%%%%%%%%%%%%%%%%%%%%%%%%%%%%%%%%%%%%%%%%%

\end{document}